\documentclass[11pt]{article}
\usepackage[margin=1in]{geometry}
\usepackage{times}

\usepackage{subcaption}
\usepackage{graphicx}
\usepackage{float}
\usepackage{xcolor}
\usepackage{booktabs}

\usepackage{amsmath,amssymb,amsfonts,latexsym,amsthm,mathtools}
\usepackage{enumerate}
\usepackage{bm}
\usepackage{bbm}
\usepackage{dsfont}
\usepackage{comment}
\usepackage{thmtools}

\usepackage{natbib}

\usepackage[colorlinks=true,linkcolor=blue]{hyperref}
\hypersetup{
  colorlinks,
  linkcolor={red!60!black},
  citecolor={blue!60!black},
  urlcolor={blue!80!black}
}

\usepackage{cleveref}
\crefname{algorithm}{Algorithm}{Algorithms}

\usepackage{pgfplots}
\pgfplotsset{compat=1.18}
\usetikzlibrary{angles, quotes}

\newtheorem{theorem}{Theorem}

\newtheorem{question}{Question}
\newtheorem*{question*}{Question}

\newtheorem{corollary}{Corollary}
\newtheorem{definition}{Definition}
\newtheorem{lemma}{Lemma}

\newcommand{\R}{\mathbb{R}}

\newcommand{\x}{{x}}
\newcommand{\Dtrain}{\mathcal{D}}
\newcommand{\Dtest}{\mathcal{D}'}

\newcommand{\DtrainA}{\mathcal{D}_A}
\newcommand{\DtestA}{\mathcal{D}_A'}

\newcommand{\classG}{\mathcal{G}}
\newcommand{\semiSet}{\mathcal{S}}
\DeclareMathOperator{\VCdim}{VCdim}
\DeclareMathOperator{\sign}{sign}
\DeclareMathOperator{\Span}{span}
\DeclareMathOperator{\Feat}{Feat}

\def\Norm#1{\left\| #1 \right\|}
\def\Paren#1{\left( #1 \right)}

\ifdefined\SHOWCOMMENTS
  \newcommand{\jiawei}[1]{{\color{brown}[Jiawei: #1]}}
  \newcommand{\linlin}[1]{{\color{blue}[Lin Lin: #1]}}
\else
  \newcommand{\jiawei}[1]{}
  \newcommand{\linlin}[1]{}
\fi

\begin{document}

\title{Sparsity and Out-of-Distribution Generalization}
\author{
Scott Aaronson\thanks{\texttt{aaronson@cs.utexas.edu}. Supported by Coefficient Giving.} \\ UT Austin \and Lin Lin Lee \thanks{\texttt{llee3@utexas.edu}. Supported by Coefficient Giving.} \\ UT Austin \and Jiawei Li\thanks{\texttt{davidlee@cs.utexas.edu}. Supported by Coefficient Giving.}\\ UT Austin
}
\date{}
\maketitle
\begin{abstract}
    Explaining out-of-distribution generalization has been a central problem in epistemology since Goodman’s ``grue'' puzzle in 1946. Today it’s a central problem in machine learning, including AI alignment.

    Here we propose a principled account of OOD generalization  with three main ingredients. First, the world is always presented to experience not as an amorphous mass, but via distinguished features (for example, visual and auditory channels). Second, Occam’s Razor favors hypotheses that are “sparse,” meaning that they depend on as few features as possible. Third, sparse hypotheses will generalize from a training to a test distribution, provided the two distributions sufficiently overlap on their restrictions to the features that are either actually relevant or hypothesized to be. The two distributions could diverge arbitrarily on other features.

    We prove a simple theorem that formalizes the above intuitions, generalizing the classic sample complexity bound of Blumer et al.~\citep{blumer1989} to an OOD context. We then generalize sparse classifiers to \emph{subspace juntas}, where the ground truth classifier depends solely on a low-dimensional linear subspace of the features.
\end{abstract}

\section{Introduction}
Centuries ago, David Hume asked why we have any rational grounds to expect that the Sun will rise tomorrow---or more generally, why \emph{any} hypothesis formed to explain past data should generalize to unseen data. Today, the spectacular successes of machine learning, and especially the worries about dangers to humanity as AI grows more powerful, motivate revisiting Hume's question from a new lens.


In the 1980s, the new field of computational learning theory had notable success at explaining in-distribution generalization --- that is, the ability of learning algorithms, given enough sample data drawn from a distribution $\mathcal{D}$, to explain further data also drawn from $\mathcal{D}$ (\citep{valiant1984}, \citep{blumer1989}). 
Alas, 
these results fall short of explaining the success of modern deep learning in at least two respects. The first is that modern deep learning is typically ``overparameterized," which is to say: the VC-dimension of the hypothesis class is typically far too large, and the sample size is far too small, for the standard theorems to explain the observed successes. More relevant for us, generalization bounds depend essentially on the assumption that the training distribution and test distribution are the same $\mathcal{D}$ --- i.e., that ``the student is tested only on types of problems that were covered in class."

The theorems are powerless to explain even the most trivial instances of OOD generalization, ones where no one seriously expects that practical ML systems will fail. As an example, suppose a neural network is trained to distinguish photos of cats from photos of dogs, until it does extremely well on the training distribution. But suppose that, in the training images, the top-left pixel always happens to be red. Now the neural network is given a new cat or dog image, where the top-left pixel is yellow. A priori, nothing rules out that the network will suddenly start identifying cats as dogs and dogs as cats. Indeed, if we let $x$ be 0 or 1 depending on whether the image is of a cat or dog, and $y$ be 0 or 1 depending on whether the top-left pixel is red or yellow, nothing rules out that the network has learned $x \oplus y$, when we wanted it to learn $x$. The two hypotheses are equally consistent with the training data, and nearly equally easy to represent in a neural network --- yet they behave in diametrically opposite ways on the test data.

This is simply a modern version of what Nelson Goodman \citep{goodman1955}, in 1946, called the ``New Riddle of Induction," or ``grue" riddle. Define the word \emph{grue} to mean ``green until January 1, 2030, and blue thereafter". Likewise, define \emph{bleen} to mean ``blue until January 1, 2030, and green thereafter". Then Goodman observes that all of our evidence seeming to indicate, for example, that emeralds are green and the sky is blue, could equally well be considered evidence that emeralds are grue and the sky is bleen. A standard response would be that grue and bleen are disfavored by Occam's Razor: why introduce the gratuitous, unmotivated belief that emeralds and the sky will switch color on a specific date, in the absence of any evidence? Goodman responds that, if we imagined a culture where grue and bleen were basic conceptual categories, then \emph{green} and \emph{blue} would seem gratuitous and unmotivated: indeed, one could only express them via complicated phrases like ``grue until January 1, 2030, and bleen thereafter".

Likewise with the image classifier, one could imagine a parameterization of the hypothesis class in which hypotheses like $x \oplus y$ (i.e., cat or dog XOR the color of the top-left pixel) were natural to express, and hypotheses like $x$ (i.e., cat or dog) were complicated. In that case, exposed to images of cats and dogs wherein the top-left pixel always happens to be red, our learner would output an $x \oplus y$ hypothesis. We conclude that, even in this toy scenario, any explanation for the success of OOD generalization must appeal to some principle beyond those considered in 1980s computational learning theory. For even after we fix the training distribution, test distribution, sample data, and hypothesis class, OOD generalization will sometimes succeed and sometimes fail, depending on which hypotheses the learning algorithm favors as defaults.

\subsection{Our Approach}

In our paper, we consider both \textit{sparse} hypotheses, which depend only on a few of the input features, and \textit{subspace juntas}, which generalize this dependency to a low-dimensional subspace of the input space. Sparse hypotheses are one way to formalize the idea of Occam's razor, under the hypothesis that the world is processed as distinguished features. 

We now model the grue puzzle to illustrate how sparsity can help.
Let $S = \mathbb{R}$, $n = 2$, and $k = 1$. There are $n = 2$ input features: the time $t$ and the ``emeraldness" $e$, which takes the value 1 for an emerald and 0 for a sapphire. Given an input of the form $(t, e)$, the goal is to learn a function $f(t, e)$, which outputs 1 if the object is green or 0 if it's blue. The target function, of course, is $f(t, e) = e$: that is, emeralds are green and sapphires are blue, regardless of the time. The difficulty is that, if we've only seen examples involving times $t < T$, then in principle we could also learn
\begin{equation*}
f(t, e) = \begin{cases} e, & \text{if } t < T, \\ 1-e, & \text{otherwise.} \end{cases}
\end{equation*}
From our perspective, however, the key is that this ``grue" hypothesis is not 1-sparse, depending as it does on both input features. The ``green" hypothesis, $f(t, e) = e$, is preferable because of its dependence on only one feature.

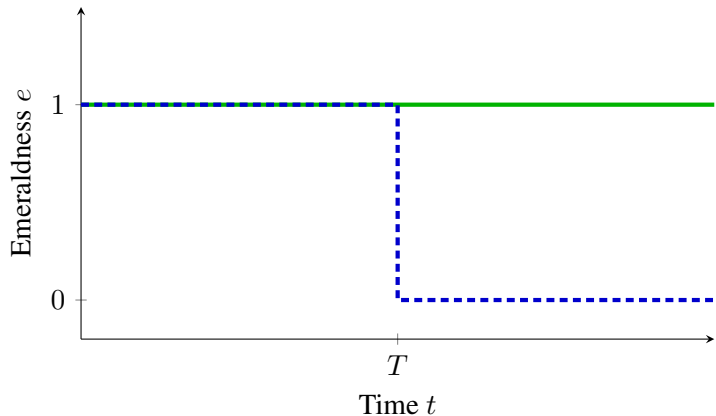
\begin{figure}[h]
    \centering
    \begin{tikzpicture}
        \begin{axis}[
            width=10cm, height=6cm,
            xlabel={Time $t$},
            ylabel={Emeraldness $e$},
            xmin=0, xmax=2,
            ymin=-0.2, ymax=1.5,
            xtick={1}, xticklabels={$T$},
            ytick={0,1},
            axis lines=left,
            grid=none
        ]

            \addplot[ultra thick, green!70!black, domain=0:2] {1};

            \draw[ultra thick, blue!80!black, densely dashed] (axis cs:0,1) -- (axis cs:1,1) -- (axis cs:1,0) -- (axis cs:2,0);
            
        \end{axis}
    \end{tikzpicture}
    \caption{Predictions for an emerald's color over time. The green line represents the 1-sparse hypothesis $f(t,e)=e$, while the blue dashed line represents the grue hypothesis which depends on both $e$ and $t$.}
\end{figure}

In some instances, however, features may not so clearly represent distinguished concepts. The coordinate basis chosen to represent the data may be arbitrary: the same data may appear different under bases which are rotations of the conventional basis.  Indeed, in modern neural nets, the first layer often applies an arbitrary linear transformation of the input data, which would destroy any basis-dependent information.
Occam's Razor intuitively should choose a notion of simplicity which does not change depending on an arbitrary choice of conventional basis. Subspace juntas provide a basis-robust answer: they formalize the idea that only a few degrees of freedom matter, even if the information is distributed across many coordinates.

\subsection{Our Results}
We give a PAC-style explanation for when out-of-distribution (OOD) generalization should succeed.

\paragraph{Sparse hypotheses.}
Let $f$ be the ground truth and $h$ be the learned hypothesis.
When $f$ is $k$-sparse (i.e., depending on at most $k$ out of $n$ features) and we restrict our attention to $k$-sparse hypotheses, we show that OOD transfer holds to any test distribution $\Dtest$ that matches (or approximately matches) the training distribution $\Dtrain$ on the features that $f$ and $h$ actually use, even if $\Dtest$ differs arbitrarily on all other features. Concretely, after
\[
m = \tilde{O}\!\left(\frac{d+k\log n}{\epsilon}\right)
\]
training samples, with high probability every $k$-sparse hypothesis consistent with the training samples has error at most $\epsilon$, not only on $\Dtrain$ but on all such distributions $\Dtest$ (\Cref{orig_feat_theorem,thm:orig_sparse_thm}). Here $d$ is an upper bound on the VC-dimension of the underlying hypothesis family when restricted to any $k$ features, and the additional $k\log n$ term is the ``price'' of searching over which $k$ features matter.

\paragraph{Subspace juntas (basis-robust sparsity).}
We also develop a basis-invariant analogue of sparsity via \emph{subspace juntas}.
We prove the analogous OOD guarantee: if both $f$ and $h$ depend only on a subspace $A$, then (approximately) matching
the distribution of the projection onto $A$ between training and test distributions is sufficient for transfer (\Cref{thm:alpha_dist_subspace_juntas}), even if the distributions differ arbitrarily in directions orthogonal to $A$.
Finally, we discuss when these subspace-junta classes have finite VC-dimension, giving conditions (e.g.,
semi-algebraicity) under which one obtains explicit polynomial VC bounds.

\newpage
\section{Related Work}

There has been a great deal of recent technical work on OOD generalization---the vast majority of it experimental, focused on predicting when OOD generalization will or will not succeed in practice, and helping it succeed more often \citep{wang2023dg_survey,zhou2022dg_survey,gulrajani2021domainbed,sagawa2019groupdro}. To the best of our knowledge, few papers have sought to identify principles that explain the success of OOD across a wide range of contexts, at a level analogous to the foundational in-distribution generalization principles identified by Valiant \citep{valiant1984} and Blumer et al. \citep{blumer1989} in the 1980s.

One exception is the field of domain adaptation, which began with the work of Ben-David et al. \citep{bendavid2006analysis, bendavid2010theory}. They show general bounds on the test error using the training error and a discrepancy term, which measures how well a hypothesis class can distinguish the training and test distributions. The result of Ben-David et al. shows that OOD generalization succeeds if no hypothesis class can distinguish the training distribution from the test distribution with large bias. Unfortunately, this condition is overly strong; it seems virtually never to be satisfied in the cases we care about in practice. Similarly, while there have been several follow-up works (e.g. \citep{Blitzer07, mmr09,redko2017theoretical,redko2020survey}) which prove relevant bounds and algorithms incorporating the discrepancy distance term, these bounds can prove too conservative to be useful in settings where the distributions differ greatly on irrelevant coordinates. The discrepancy term can be maximal even when OOD generalization is intuitively straightforward (e.g. in our toy example where a single pixel flips between training and test), rendering the bounds vacuous. Impossibility results such as \citep{bendavid2010impossibility} show that we cannot guarantee generalization without some relationship between the training and test distributions. We address this with a different sufficient condition, guaranteeing OOD generalization whenever the train and test distributions approximately overlap on the coordinates or subspace which the learned hypothesis and the ground truth depend on, allowing for arbitrary behavior elsewhere.

Another line of work tackles OOD generalization by formalizing notions of invariance across multiple training environments. For instance, invariant risk minimization \citep{arjovsky2019invariant} proposes learning a data representation such that the same optimal hypothesis can be shared across all training environments. More recent work, such as that of Ye et al. \citep{Ye2021Towards}, has aimed to provide bounds for domain generalization by quantifying how the invariance of learned features on several training environments translates to test environments via an expansion function. Rather than proposing an invariance learning objective or relying on multiple environments, our results give a PAC-style sufficient condition for OOD generalization.

Finally, our sparsity assumptions relate to the classical study of learning a few relevant variables (juntas, see for example \citep{blum1994relevant, mossel04juntas}) and a relevant low dimensional subspace (subspace juntas, see for example \citep{blum1997learning, vempala2013complexity, AV06projection}). While these mainly address finding efficient algorithms for learning these functions under distributional assumptions such as uniformity or Gaussianity, we use sparsity and dependence on a low-dimensional subspace primarily to formalize Occam's Razor style arguments and to cleanly state sufficient conditions for OOD generalization.

\section{Preliminaries}

We introduce some notation and results from PAC learning theory which will be useful.
\begin{definition}
Let $S$ be a sample space, and let $\mathcal{H}$ be a class of functions $h:S\rightarrow\{0,1\}$. We say that $\mathcal{H}$ shatters a set $X=\{x_{1},...,x_{k}\}\subseteq S$ if for all $2^{k}$ possible choices for $h(x_{1}),...,h(x_{k})$, there exists an $h\in\mathcal{H}$ consistent with those choices. Then the VC-dimension of $\mathcal{H}$, or $\VCdim(\mathcal{H})$, is the size $k$ of the largest set shattered by $\mathcal{H}$ (or $\VCdim(\mathcal{H})=\infty$ if there is no maximum).
\end{definition}

\begin{lemma}[Sauer's Lemma]\label{lemma:Sauer}
Let S be a set of size n and let $\mathcal{H}$ be a class of functions $h: S\rightarrow\{0,1\}$. Then
\[
|\mathcal{H}|\le\sum_{i=0}^{\VCdim(\mathcal{H})}\binom{n}{i}.
\]
\end{lemma}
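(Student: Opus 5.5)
The plan is to prove the bound by induction on $n + d$, where $d = \VCdim(\mathcal{H})$, using the standard ``shifting'' or restriction argument (Pajor/Frankl style). The statement concerns $\mathcal{H}$ viewed as a set of labelings of the finite set $S$, so $|\mathcal{H}|$ counts distinct restrictions to $S$. Write $\Phi_d(n)=\sum_{i=0}^{d}\binom{n}{i}$.

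The base cases are handled first. If $n=0$, there is at most one labeling, and $\Phi_d(0)=1$. If $d=0$, no single point is shattered, so every point receives the same label under all $h\in\mathcal{H}$. Hence $|\mathcal{H}|\le 1=\Phi_0(n)$.

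For the inductive step, fix a point $x\in S$ and let $S'=S\setminus\{x\}$. Define $\mathcal{H}_1$ to be the set of restrictions of $\mathcal{H}$ to $S'$. Define $\mathcal{H}_2$ to be the set of labelings $g$ of $S'$ such that both extensions of $g$, namely $g$ with $x\mapsto 0$ and $g$ with $x\mapsto 1$, lie in $\mathcal{H}$. Each labeling in $\mathcal{H}_1$ has either one or two preimages in $\mathcal{H}$, and it has two exactly when it lies in $\mathcal{H}_2$. This gives
\[
|\mathcal{H}|=|\mathcal{H}_1|+|\mathcal{H}_2|.
\]

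Next I bound the two VC-dimensions. Clearly $\VCdim(\mathcal{H}_1)\le d$, since any set shattered by the restrictions is shattered by $\mathcal{H}$ itself. The key claim is $\VCdim(\mathcal{H}_2)\le d-1$: if $T\subseteq S'$ is shattered by $\mathcal{H}_2$, then $T\cup\{x\}$ is shattered by $\mathcal{H}$, because every pattern on $T$ extends in $\mathcal{H}$ with either value at $x$. Applying the induction hypothesis on $n-1$ points to each class gives
\[
|\mathcal{H}|\le \Phi_d(n-1)+\Phi_{d-1}(n-1).
\]
Pascal's identity $\binom{n-1}{i}+\binom{n-1}{i-1}=\binom{n}{i}$, summed over $i$, turns the right side into $\Phi_d(n)$. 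If $\VCdim$ exceeds $n$, the bound $2^n$ holds trivially, so there is no issue there.

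The only step needing care is the dimension drop for $\mathcal{H}_2$, that is, checking that a shattered $T$ really lifts to a shattered $T\cup\{x\}$ of size $|T|+1\le d$. This follows directly from the definition of $\mathcal{H}_2$, so I expect no genuine obstacle.
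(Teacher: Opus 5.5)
The paper does not prove this lemma. It quotes Sauer's Lemma as a classical result, which underlies the cited bound of Blumer et al. It is also used explicitly in the appendix proof of Lemma~\ref{lemma:general_VC_union}, to bound the number of labelings that a single class $\mathcal{H}_{i^*}$ of VC-dimension at most $d$ induces on the points $x_1,\dots,x_D$.

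Your argument is the standard restriction (shifting) induction, and it is correct. The three key steps all hold: the identity $|\mathcal{H}|=|\mathcal{H}_1|+|\mathcal{H}_2|$, the lifting of a set $T$ shattered by $\mathcal{H}_2$ to a set $T\cup\{x\}$ shattered by $\mathcal{H}$, and the summed Pascal identity $\Phi_d(n-1)+\Phi_{d-1}(n-1)=\Phi_d(n)$.

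Three small tidy-ups:
\begin{itemize}
\item The induction hypothesis actually yields $|\mathcal{H}_1|\le\Phi_{d_1}(n-1)$ and $|\mathcal{H}_2|\le\Phi_{d_2}(n-1)$ with $d_1\le d$ and $d_2\le d-1$. So you are silently using that $\Phi_d(n)$ is nondecreasing in $d$. It is cleaner to state the inductive claim as ``$\VCdim(\mathcal{H})\le d$ implies $|\mathcal{H}|\le\Phi_d(n)$.'' This makes the monotonicity automatic, and it is also exactly the form the paper needs in Lemma~\ref{lemma:general_VC_union}.
\item $\mathcal{H}_2$ may be empty. In that case its VC-dimension is not even defined under the paper's definition, but $|\mathcal{H}_2|=0$, so there is nothing to show.
\item Your closing remark about $\VCdim(\mathcal{H})$ exceeding $n$ is vacuous, since a class of functions on $n$ points cannot shatter more than $n$ points.
\end{itemize}
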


Using \Cref{lemma:Sauer}, Blumer et al. proved the following famous result.

\begin{theorem}\label{thm:VCdim_PAC} Let $S$ be a sample space, let $\mathcal{D}$ be a distribution over $S$, and let $\epsilon$, $\delta>0$. Let $\mathcal{H}$ be a hypothesis class, consisting of hypotheses $h: S\rightarrow\{0,1\}$ and let $f\in\mathcal{H}$ be the ``ground truth." Then there exists a constant $C$ such that, for all
\[
m\ge\frac{C \VCdim (\mathcal{H})}{\epsilon}\log\frac{1}{\delta\epsilon}
\]
the following holds. With probability at least $1-\delta$ over samples $x_{1},...,x_{m}$ drawn independently from $\mathcal{D}$, any hypothesis $h\in\mathcal{H}$ such that $h(x_{i})=f(x_{i})$ for all $i\in[m]$ ``generalizes,'' in the sense that
\[
\Pr_{x\sim\mathcal{D}}[h(x)=f(x)]\ge1-\epsilon.
\]
\end{theorem}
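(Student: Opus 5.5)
The plan is the classical double-sampling (``ghost sample'') argument of Blumer et al., with \Cref{lemma:Sauer} bounding the number of behaviors on a finite sample. Write $d=\VCdim(\mathcal{H})$. Call $h$ \emph{bad} if $\Pr_{x\sim\mathcal{D}}[h(x)\ne f(x)]>\epsilon$. Let $B$ be the event that some bad $h\in\mathcal{H}$ agrees with $f$ on all of $x_1,\dots,x_m$. It suffices to show $\Pr[B]\le\delta$ once $m$ is as in the statement. Measurability issues are set aside, as is standard.

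\textbf{Step 1 (symmetrization).} Draw a second independent sample $x_{m+1},\dots,x_{2m}$ from $\mathcal{D}$. Let $B'$ be the event that some $h\in\mathcal{H}$ is consistent with $f$ on the first half but errs on at least $\epsilon m/2$ points of the second half. Condition on $B$ and fix a witnessing bad $h$. Its number of errors on the second half is binomial with mean greater than $\epsilon m$. By Chebyshev (or a Chernoff bound), this count is at least $\epsilon m/2$ with probability at least $1/2$ whenever $m\ge 8/\epsilon$. Hence $\Pr[B]\le 2\Pr[B']$.

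\textbf{Step 2 (random swaps).} Sampling $2m$ points and then applying a uniformly random permutation gives the same distribution, so I bound $\Pr[B']$ conditional on the multiset $T=\{x_1,\dots,x_{2m}\}$. By \Cref{lemma:Sauer} applied to $\mathcal{H}$ restricted to $T$, there are at most $\sum_{i\le d}\binom{2m}{i}\le(2em/d)^d$ distinct error patterns $\{x\in T: h(x)\ne f(x)\}$. Fix one pattern with $r\ge\epsilon m/2$ errors. The probability that a random split places all $r$ errors in the second half is at most $\binom{m}{r}/\binom{2m}{r}\le 2^{-r}\le 2^{-\epsilon m/2}$. A union bound gives
\[
\Pr[B]\le 2\,(2em/d)^d\,2^{-\epsilon m/2}.
\]

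\textbf{Step 3 (solving for $m$).} It remains to check that this bound is at most $\delta$ when $m\ge \frac{Cd}{\epsilon}\log\frac{1}{\delta\epsilon}$. Taking logarithms, the requirement becomes $\epsilon m/2\gtrsim d\log(m/d)+\log(2/\delta)$. The only mildly delicate point is the $d\log m$ term on the right, because $m$ appears on both sides. I would handle it with the elementary fact that $m\ge \frac{c\,d}{\epsilon}\log\frac1\epsilon$ implies $d\log m\le \epsilon m/8$ for a suitable constant $c$; the $\log(1/\delta)$ term is then absorbed by the $\log\frac{1}{\delta\epsilon}$ factor.

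I expect Steps 1 and 2 to be the conceptual core, and in particular the observation that \Cref{lemma:Sauer} only needs to be applied to the finite set $T$. That is what lets an infinite $\mathcal{H}$ be controlled by $d$. The constant bookkeeping in Step 3 is routine.
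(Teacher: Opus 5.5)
Your proposal is correct and is the classical Blumer et al.\ double-sample argument built on \Cref{lemma:Sauer}; the paper states this theorem as Blumer et al.'s result rather than reproving it, so there is no separate route to compare. Two small repairs in Step 3: the term to dominate is $d\log(2em/d)$, not $d\log m$, since your ``elementary fact'' is false for $d\log m$ when $d$ is very large. Also, that fact, the absorption of $\log(2/\delta)$, and the condition $m\ge 8/\epsilon$ from Step 1 all silently need something like $\epsilon\le 1/2$. The statement itself needs such an assumption too, because its threshold on $m$ tends to $0$ as $\epsilon,\delta\to 1$.
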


\subsection{Shifting between Distributions}

We now take a tiny first step towards OOD generalization by introducing a quantity which measures the worst case probability amplification for any event between two distributions. This allows us to understand how much more likely a rare bad event in one distribution might become in another distribution. This notion is also defined in binary hypothesis testing by \citep[Eq. 100]{polyanskiy2010channel}, but we use it here for the purpose of understanding distribution shift for OOD generalization.

\begin{definition}
Given two distributions $\mathcal{D}$ and $\mathcal{D}'$ over the same sample space $S$, we define $\alpha_{\mathcal{D},\mathcal{D}'}(\epsilon)$ to be the infimum, over all events $E$ with $\Pr_{\mathcal{D}'}[E] \ge \epsilon$, of $\Pr_{\mathcal{D}}[E]$.
\end{definition}

Note that if $S$ is finite, then $\alpha_{\mathcal{D},\mathcal{D}'}(\epsilon) > 0$ for all $\epsilon > 0$ if and only if the support of $\mathcal{D}$ contains the support of $\mathcal{D}'$. In general, for any $\epsilon$ less than the probability mass contained in the support of $\Dtest$ but not in the support of $\Dtrain$, we will have $\alpha_{\Dtrain, \Dtest}(\epsilon) = 0$. Intuitively, $\alpha_{\mathcal{D},\mathcal{D}'}(\epsilon)$ is the function that converts small probabilities in $\mathcal{D}'$ to the corresponding small probabilities in $\mathcal{D}$. If we are being trained on samples from $\mathcal{D}$, and will later be tested on samples from $\mathcal{D}'$ and we want to catch bad events with probability at least $\epsilon$ in $\mathcal{D}'$, then we need to catch bad events with probability at least $\alpha_{\mathcal{D},\mathcal{D}'}(\epsilon)$ in $\mathcal{D}$. This quantity is not necessarily symmetric.

To illustrate, let $S = \mathbb{R}_{\ge0}$, let $\mathcal{D}$ be the uniform distribution over $[0, K]$, and let $\mathcal{D}'$ be exponentially distributed with mean 1. Then for all $\epsilon \in [0, 1]$,
\[
\alpha_{\mathcal{D},\mathcal{D}'}(\epsilon) = \begin{cases} 0, & \text{if } \epsilon \le e^{-K} \\ \frac{1}{K}\ln \frac{1}{1-\epsilon+e^{-K}} & \text{otherwise,} \end{cases}
\]
while
\[
\alpha_{\mathcal{D},\mathcal{D}'}(\epsilon) = e^{-(1-\epsilon)K} - e^{-K} .
\]
We now prove a generalization of \Cref{thm:VCdim_PAC} that applies even when the training distribution $\mathcal{D}$ and test distribution $\mathcal{D}'$ are different, so long as $\alpha_{\mathcal{D},\mathcal{D}'}(\epsilon) > 0$ for all $\epsilon > 0$.

\begin{theorem}\label{thm:orig_alpha}
Let $S$ be a sample space, let $\mathcal{D}, \mathcal{D}'$ be two distributions over $S$, and let $\epsilon, \delta > 0$. Let $\mathcal{H}$ be a hypothesis class, consisting of hypotheses $h: S \rightarrow \{0,1\}$, and let $f \in \mathcal{H}$ be the ``ground truth." Then there exists a constant $C$ such that, for all
\[
m \ge \frac{C \cdot \VCdim(\mathcal{H})}{\alpha_{\mathcal{D},\mathcal{D}'}(\epsilon)} \log \frac{1}{\delta \cdot \alpha_{\mathcal{D},\mathcal{D}'}(\epsilon)} ,
\]
the following holds. With probability at least $1-\delta$ over samples $x_{1},...,x_{m}$ drawn independently from $\mathcal{D}$, any hypothesis $h \in \mathcal{H}$ such that $h(x_{i}) = f(x_{i})$ for all $i \in [m]$ ``generalizes to $\mathcal{D}'$," in the sense that
\[
\Pr_{x \sim \mathcal{D}'} [h(x) = f(x)] \ge 1 - \epsilon .
\]
\end{theorem}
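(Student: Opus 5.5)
The plan is to reduce the statement to \Cref{thm:VCdim_PAC} applied to the training distribution $\mathcal{D}$ with accuracy parameter $\epsilon' := \alpha_{\mathcal{D},\mathcal{D}'}(\epsilon)$. If $\epsilon' = 0$ the hypothesis on $m$ is vacuous (infinite), so I will assume $\epsilon' > 0$. Note that $\epsilon' \le 1$, since the event $E = S$ has $\Pr_{\mathcal{D}'}[E] = 1 \ge \epsilon$ (we may assume $\epsilon \le 1$, otherwise the conclusion is trivial). Hence the assumed bound on $m$ is exactly the bound of \Cref{thm:VCdim_PAC} with $\epsilon$ replaced by $\epsilon'$, and I will use the same constant $C$.

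\textbf{Step 1.} Invoke \Cref{thm:VCdim_PAC}. With probability at least $1-\delta$ over $x_1,\dots,x_m \sim \mathcal{D}$, every $h \in \mathcal{H}$ consistent with $f$ on the sample satisfies
\[
\Pr_{x\sim\mathcal{D}}[h(x)\neq f(x)] \le \epsilon'.
\]
To get a strict inequality, which Step 2 needs, I will apply the theorem with $\epsilon'/2$ in place of $\epsilon'$. This only changes the constant $C$, because $\log\frac{2}{\delta\epsilon'} \le 2\log\frac{1}{\delta\epsilon'}$ whenever $\delta\epsilon'$ is bounded away from $1$. The case $\delta\epsilon'$ close to $1$ is degenerate and can be handled by enlarging $C$. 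The conclusion is then
\[
\Pr_{\mathcal{D}}[E_h] \le \epsilon'/2 < \epsilon',
\]
where $E_h := \{x : h(x) \neq f(x)\}$.

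\textbf{Step 2.} Transfer the bound to $\mathcal{D}'$ by contraposition using the definition of $\alpha$. Fix any such $h$ and suppose $\Pr_{\mathcal{D}'}[E_h] \ge \epsilon$. Then $E_h$ is one of the events in the infimum defining $\alpha_{\mathcal{D},\mathcal{D}'}(\epsilon)$, so
\[
\Pr_{\mathcal{D}}[E_h] \ge \epsilon'.
\]
This contradicts Step 1. Therefore $\Pr_{\mathcal{D}'}[E_h] < \epsilon$, which gives $\Pr_{x\sim\mathcal{D}'}[h(x)=f(x)] \ge 1-\epsilon$.

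\textbf{Expected obstacle.} The only delicate points are technical. The first is the strict-versus-nonstrict inequality, handled in Step 1 by halving $\epsilon'$. The second is measurability: $E_h$ must be a measurable event for both distributions, which I will take as implicit, as \Cref{thm:VCdim_PAC} already does. Beyond these, the argument is a direct substitution.
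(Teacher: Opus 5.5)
Your proof is correct and is essentially the paper's argument. Both rest on the same observation: any $h$ with $\Pr_{\mathcal{D}'}[h\neq f]\ge\epsilon$ has $\Pr_{\mathcal{D}}[h\neq f]\ge\alpha_{\mathcal{D},\mathcal{D}'}(\epsilon)$, by definition of the infimum.

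The only difference is in how \Cref{thm:VCdim_PAC} is used. The paper reruns its proof, which shows directly that with probability $1-\delta$ no $h$ whose $\mathcal{D}$-error is at least the threshold is consistent with the sample. So no strictness issue arises there and $C$ is unchanged. You instead use the theorem as a black box, and pay a constant factor to make the inequality strict.

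One small correction concerns your remark that the regime $\delta\epsilon'\to 1$ ``can be handled by enlarging $C$.'' That is false as stated. There your threshold $\frac{C'\VCdim(\mathcal{H})}{\epsilon'}\log\frac{1}{\delta\epsilon'}$ tends to $0$, while the threshold needed to invoke \Cref{thm:VCdim_PAC} at accuracy $\epsilon'/2$ is at least $2C\VCdim(\mathcal{H})\log 2$.

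The fix is to apply the theorem at accuracy $(1-\gamma)\epsilon'$ with $\gamma=\min\{1/2,\,1-\delta\epsilon'\}$. This $\gamma$ is positive whenever $\delta<1$, and the case $\delta\ge 1$ is trivial. Since $1-\gamma\ge\delta\epsilon'$, we get $\log\frac{1}{\delta(1-\gamma)\epsilon'}\le 2\log\frac{1}{\delta\epsilon'}$. Since $\gamma\le 1/2$, the $\frac{1}{(1-\gamma)\epsilon'}$ factor costs at most another $2$. So the constant $4C$ suffices uniformly.
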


\begin{proof}
Let $h \in \mathcal{H}$ be any hypothesis such that
\[
\Pr_{x \sim \mathcal{D}'}[h(x) \ne f(x)] \ge \epsilon .
\]
Then by definition of $\alpha_{\mathcal{D},\mathcal{D}'}(\epsilon)$, we have
\[
\Pr_{x \sim \mathcal{D}}[h(x) \ne f(x)] \ge \alpha_{\mathcal{D},\mathcal{D}'}(\epsilon) .
\]
But this is the only ``input" that the proof of \Cref{thm:VCdim_PAC} needs, and means that we can redo that proof with $\alpha_{\mathcal{D},\mathcal{D}'}(\epsilon)$ in place of $\epsilon$.
\end{proof}

Unfortunately, \Cref{thm:orig_alpha} does not cover most cases of OOD generalization that we care about in practice, where we really might have $\alpha_{\mathcal{D},\mathcal{D}'}(\epsilon) = 0$---that is, where the test distribution $\mathcal{D}'$ might include points that never occurred at all in the training distribution $\mathcal{D}$. This motivates us to consider more structured sample spaces, in which each sample point is divisible into $n$ features. We then consider pairs of distributions $\mathcal{D}, \mathcal{D}'$ that overlap on some features but could be arbitrarily far apart on other features.

\section{Sparse Hypotheses}

Formally, our sample space will now have the form $S^{n}$, where $S$ is the feature space and $n$ is the number of features. We consider families of base hypothesis classes of the form $\mathcal{H} = (\mathcal{H}_{k})_{k \ge 1}$, where each $\mathcal{H}_{k}$ consists of functions $h: S^{k} \rightarrow \{0,1\}$. We then let $\hat{\mathcal{H}}_{k}$, the class of $k$-sparse hypotheses, consist of all functions $\hat{h}: S^{n} \rightarrow \{0,1\}$ that have the form
\[
\hat{h}(x_{1},...,x_{n}) = h(x_{i(1)},...,x_{i(k)}),
\]
for some $h \in \mathcal{H}_{k}$ and some distinct $i(1),...,i(k) \in [n]$. And we let $\hat{\mathcal{H}} := (\hat{\mathcal{H}}_{k})_{k \ge 1}$. To analyze these classes of sparse hypotheses, we first need a technical lemma. The bound on the VC-dimension of a union of hypothesis classes with bounded VC-dimension is standard (e.g. \citep{shalev2014understanding}), but for completeness we provide a proof with explicit constants in \Cref{sec: VC_union}.

\begin{restatable}{lemma}{GeneralVCUnion}\label{lemma:general_VC_union}
Let $S$ be a sample space, and let $\mathcal{H}_{1},...,\mathcal{H}_{M}$ be classes of hypotheses $h: S \rightarrow \{0,1\}$ such that $\VCdim(\mathcal{H}_{i}) \le d$ for all $i \in [M]$. Let $\mathcal{H} := \mathcal{H}_{1} \cup ... \cup \mathcal{H}_{M}$. Then
\[
\VCdim(\mathcal{H}) \le 4d + 10 \ln M.
\]
\end{restatable}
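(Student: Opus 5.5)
The plan is a standard counting argument via Sauer's Lemma (\Cref{lemma:Sauer}). Suppose $\mathcal{H}$ shatters a set $X \subseteq S$ of size $N$. Then $\mathcal{H}$ realizes all $2^N$ labelings of $X$. Each realized labeling must come from some $\mathcal{H}_i$. Applying \Cref{lemma:Sauer} to the restriction of $\mathcal{H}_i$ to $X$, whose VC-dimension is at most $d$, each $\mathcal{H}_i$ realizes at most $\sum_{j=0}^{d}\binom{N}{j}$ labelings. Summing over $i$ gives
\[
2^N \le M \sum_{j=0}^{d}\binom{N}{j}.
\]
It remains to show this inequality fails whenever $N > 4d + 10\ln M$, which forces $\VCdim(\mathcal{H}) \le 4d + 10\ln M$.

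For the numerical step, I will first handle trivial cases. If $d = 0$, each class realizes one labeling, so $2^N \le M$. This gives $N \le \log_2 M = \ln M/\ln 2 < 10\ln M$, and the case $M=1$ (where $N=0$) is consistent. For $d \ge 1$, I will bound the binomial sum. The cleanest route is $\sum_{j\le d}\binom{N}{j} \le (eN/d)^d$ for $N \ge d$. Alternatively, for $N \ge 4d$, I can use the entropy bound $\sum_{j\le d}\binom{N}{j} \le 2^{N H(d/N)} \le 2^{N H(1/4)}$, with $H(1/4) \approx 0.811$. The entropy route looks more convenient for the constant 4. It yields
\[
2^{N(1-H(1/4))} \le M, \quad\text{i.e.}\quad N \le \frac{\ln M}{(1-H(1/4))\ln 2} \approx \frac{\ln M}{0.131} \approx 7.6\ln M < 10\ln M.
\]
So if $N \ge 4d$ and $N > 10\ln M$, we get a contradiction. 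If instead $N < 4d$, then $N \le 4d + 10\ln M$ holds trivially. Thus in every case $N \le 4d + 10\ln M$.

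The main obstacle is the constant bookkeeping in this last step. In particular, I must verify the entropy bound $\sum_{j\le \beta N}\binom{N}{j}\le 2^{NH(\beta)}$ for $\beta \le 1/2$, and check that $H(d/N)\le H(1/4)$ by monotonicity of $H$ on $[0,1/2]$ when $d/N \le 1/4$. I will include a short proof of the entropy bound for completeness. The argument is standard: for $\beta\le 1/2$,
\[
1 = (\beta + (1-\beta))^N \ge \sum_{j\le \beta N}\binom{N}{j}\beta^j(1-\beta)^{N-j} \ge \sum_{j\le\beta N}\binom{N}{j}\,\beta^{\beta N}(1-\beta)^{(1-\beta)N}.
\]
The last inequality holds because $\beta/(1-\beta)\le 1$. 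Rearranging gives the claim. If this gets messy, I will fall back on $(eN/d)^d$ and directly check that $N = 4d + 10\ln M$ violates $2^N \le M(eN/d)^d$. That check uses $(eN/d)^d \le e^{d}(4+10\ln M/d)^d$ together with $\ln(1+t)\le t$.
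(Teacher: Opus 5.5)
Your proof is correct and follows the paper's argument: both reduce to $2^N \le M\sum_{j\le d}\binom{N}{j}$ by applying Sauer's Lemma on a shattered set (the paper phrases this as pigeonholing onto a single class $\mathcal{H}_{i^*}$). The only difference is the tail estimate. The paper uses the Hoeffding-form bound $\sum_{i\le d}\binom{D}{i}\le 2^D e^{-L^2/(2D)}$ with $L=D/2-d$ and then solves a quadratic in $\sqrt{D}$. Your entropy bound, which is the relative-entropy form of the same Chernoff estimate, combined with the case split at $N=4d$, avoids that algebra and in fact gives the slightly stronger $N\le\max\{4d,\ \ln M/((1-H(1/4))\ln 2)\}\approx\max\{4d,\ 7.65\ln M\}$.
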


\Cref{lemma:general_VC_union} has the following immediate corollary.

\begin{corollary}\label{cor:sparse_VC_union}
Let $S$ be a sample space, and let $\mathcal{H} = (\mathcal{H}_{k})_{k \ge 1}$ be base hypothesis classes. Then for all $k$,
\[
\VCdim(\hat{\mathcal{H}}_{k}) \le 4 \cdot \VCdim(\mathcal{H}_{k}) + 10k \ln n.
\]
\end{corollary}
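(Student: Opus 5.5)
We will obtain the corollary by writing $\hat{\mathcal{H}}_{k}$ as a finite union of classes of the kind covered by \Cref{lemma:general_VC_union}. Index the pieces by the choice of coordinates. For each ordered tuple $\tau = (i(1),\dots,i(k))$ of distinct indices in $[n]$, let
\[
\mathcal{G}_{\tau} := \{\, x \mapsto h(x_{i(1)},\dots,x_{i(k)}) \;:\; h \in \mathcal{H}_{k} \,\}.
\]
These are classes of functions on the sample space $S^{n}$, and by the definition of $k$-sparse hypotheses we have $\hat{\mathcal{H}}_{k} = \bigcup_{\tau} \mathcal{G}_{\tau}$. The number of tuples is $M = n(n-1)\cdots(n-k+1) \le n^{k}$, so $\ln M \le k \ln n$.

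Next we bound the VC-dimension of each piece by $d := \VCdim(\mathcal{H}_{k})$. Suppose $\mathcal{G}_{\tau}$ shatters points $z^{1},\dots,z^{r} \in S^{n}$. Consider their projections $\pi_{\tau}(z^{j}) = (z^{j}_{i(1)},\dots,z^{j}_{i(k)}) \in S^{k}$.

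First, these projections are pairwise distinct. If two of them coincided, every function in $\mathcal{G}_{\tau}$ would take the same value on the corresponding two points, and then those two points could not receive different labels, contradicting shattering.

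Second, any labeling of $z^{1},\dots,z^{r}$ is realized by some $h\circ\pi_{\tau}$ with $h \in \mathcal{H}_{k}$. That same $h$ then realizes the corresponding labeling of the projected points. Hence $\mathcal{H}_{k}$ shatters $r$ distinct points of $S^{k}$, so $r \le d$ and $\VCdim(\mathcal{G}_{\tau}) \le d$.

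Finally, apply \Cref{lemma:general_VC_union} to the $M$ classes $\mathcal{G}_{\tau}$. This gives
\[
\VCdim(\hat{\mathcal{H}}_{k}) \le 4d + 10\ln M \le 4\,\VCdim(\mathcal{H}_{k}) + 10 k\ln n,
\]
which is the claim.

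The only step that needs care is the distinct-projections point: a shattered set must project injectively, since otherwise the restricted class could not be said to shatter a set of the same size. If one prefers, unordered subsets of coordinates can be used instead of ordered tuples, giving $M = \binom{n}{k}$. In that case each piece must also absorb all orderings of its $k$ indices, so ordered tuples are the cleaner choice, and the bound $n^{k}$ is good enough anyway.
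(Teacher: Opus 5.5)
Your proof is correct and follows the paper's argument: $\hat{\mathcal{H}}_{k}$ is a union of at most $n^{k}$ copies of $\mathcal{H}_{k}$, one per choice of coordinates, each with VC-dimension at most $\VCdim(\mathcal{H}_{k})$, and \Cref{lemma:general_VC_union} finishes it. Your version is slightly more careful than the paper's one-line proof, which counts $\binom{n}{k}$ unordered feature sets (implicitly treating $\mathcal{H}_{k}$ as symmetric under permuting coordinates) and does not explain why composing with a projection cannot raise the VC-dimension; indexing by ordered tuples, as you do, matches the definition of $\hat{\mathcal{H}}_{k}$ and still gives $M \le n^{k}$.
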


\begin{proof}
The class $\hat{\mathcal{H}}_{k}$ is the union of ${n \choose k} \le n^{k}$ copies of $\mathcal{H}_{k}$, one for each list of $k$ distinct features in $[n]$. The result now follows from \Cref{lemma:general_VC_union}.
\end{proof}

We introduce two more bits of notation. First, given a hypothesis $h: S^n \rightarrow \{0,1\}$, let $\Feat(h) \subseteq [n]$ be the set of features on which $h$ non-trivially depends. So in particular, if $h$ is $k$-sparse then $|\Feat(h)| \le k$. Second, given a distribution $\mathcal{D}$ over $S^{n}$ as well as a subset $A \subseteq [n]$ of features, we let $\Dtrain_A$ denote $\Dtrain$ marginalized to $A$. We can now prove our first OOD generalization theorem that cares about feature structure.

\begin{theorem}\label{orig_feat_theorem}
Let $S$ be a feature space, let $n$ be the number of features, let $\mathcal{D}$ be a distribution over $S^{n}$, and let $\epsilon, \delta > 0$. Let $\mathcal{H} = (\mathcal{H}_{k})_{k \ge 1}$ be base hypothesis classes over $S$. Let $f: S^{n} \rightarrow \{0,1\}$ be the ``ground truth," and assume $f \in \hat{\mathcal{H}}_{k}$ for some fixed sparsity $k$. Then there exists a constant $C$ such that, for all
\[
m \ge C \frac{\VCdim(\mathcal{H}_{k}) + k \log n}{\epsilon} \log \frac{1}{\delta\epsilon}
\]
the following holds. With probability at least $1-\delta$ over $x_{1},...,x_{m}$ drawn independently from $\mathcal{D}$, any hypothesis $h \in \hat{\mathcal{H}}_{k}$ such that $h(x_{i}) = f(x_{i})$ for all $i \in [m]$ ``generalizes out of distribution," in the sense that
\[
\Pr_{x \sim \mathcal{D}'}[h(x) = f(x)] \ge 1 - \epsilon,
\]
for all distributions $\mathcal{D}'$ such that $\mathcal{D}_{A}' = \mathcal{D}_{A}$, where $A := \Feat(h) \cup \Feat(f)$.
\end{theorem}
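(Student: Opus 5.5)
The plan is to reduce the out-of-distribution statement to an in-distribution one, and then invoke \Cref{thm:VCdim_PAC} on the class $\hat{\mathcal{H}}_k$, using \Cref{cor:sparse_VC_union} to control its VC-dimension.

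First I apply \Cref{thm:VCdim_PAC} with hypothesis class $\hat{\mathcal{H}}_k$ and ground truth $f \in \hat{\mathcal{H}}_k$. By \Cref{cor:sparse_VC_union}, $\VCdim(\hat{\mathcal{H}}_k) \le 4\VCdim(\mathcal{H}_k) + 10k\ln n$. Hence, after absorbing the factors $4$ and $10$ (and the change of logarithm base) into the constant $C$, the stated sample size $m$ exceeds the threshold required by \Cref{thm:VCdim_PAC}. The conclusion is that, with probability at least $1-\delta$ over the sample, every $h\in\hat{\mathcal{H}}_k$ consistent with $f$ on $x_1,\dots,x_m$ satisfies
\[
\Pr_{x\sim\mathcal{D}}[h(x)\neq f(x)]\le\epsilon .
\]
This event does not refer to $\mathcal{D}'$ at all. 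So it suffices to show that, on this event, the same bound transfers deterministically to every admissible $\mathcal{D}'$.

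For the transfer step, fix such an $h$ and set $A=\Feat(h)\cup\Feat(f)$. Both $h$ and $f$ depend only on the coordinates in $A$, so the disagreement indicator $\mathbf{1}[h(x)\neq f(x)]$ equals $g(x_A)$ for a single function $g$ on $S^{A}$. Therefore $\Pr_{\mathcal{D}}[h\neq f]=\Pr_{y\sim\mathcal{D}_A}[g(y)=1]$, and likewise $\Pr_{\mathcal{D}'}[h\neq f]=\Pr_{y\sim\mathcal{D}'_A}[g(y)=1]$. Since $\mathcal{D}'_A=\mathcal{D}_A$, the two error probabilities coincide, giving $\Pr_{\mathcal{D}'}[h=f]\ge 1-\epsilon$. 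Equivalently, $\alpha_{\mathcal{D},\mathcal{D}'}(\epsilon)\ge\epsilon$ holds for every event measurable in the $A$ coordinates, in the spirit of \Cref{thm:orig_alpha}.

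The main subtlety is the order of quantifiers. The set $A$ depends on the learned $h$, which in turn depends on the sample, and $\mathcal{D}'$ ranges over all distributions matching $\mathcal{D}$ on $A$. This causes no difficulty, because the high-probability event is the uniform in-distribution guarantee over all consistent $h$, and the transfer step is pointwise in $h$ and $\mathcal{D}'$.

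One should also check measurability: the set $\{x : h(x)\ne f(x)\}$ must be a cylinder set over $A$. This holds whenever $\Feat(\cdot)$ is defined so that each function genuinely factors through its coordinates in $\Feat$. Here every $k$-sparse hypothesis is explicitly given as a function of $k$ coordinates, so we may take $\Feat(h)$ to be those listed coordinates, or a subset of them through which $h$ still factors, and the argument goes through.
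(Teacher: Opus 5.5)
Your proposal is correct and follows essentially the same route as the paper. The paper likewise applies \Cref{thm:VCdim_PAC} to $\hat{\mathcal{H}}_k$ with the VC bound from \Cref{cor:sparse_VC_union}, and uses the fact that the error of $h$ depends only on $x_A$ to conclude it is the same under $\mathcal{D}$ and $\mathcal{D}'$. The only cosmetic difference is that the paper does the transfer by conditioning on $x_A$ and swapping $\mathcal{D}_B(x_A)$ for $\mathcal{D}'_B(x_A)$, whereas you push the disagreement indicator forward to $S^A$ directly, which is slightly cleaner because it avoids conditional distributions.
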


\begin{proof}
Let $B := [n] - A$ so that we can represent any $x \in S^{n}$ as $(x_{A}, x_{B})$, where $x_{A} = (x_{i})_{i \in A}$ and $x_{B} = (x_{i})_{i \in B}$. Also, given some $x_{A}$ let $\mathcal{D}_{B}(x_{A})$ and $\mathcal{D}_{B}'(x_{A})$ be the probability distributions over $x_{B}$ induced by $\mathcal{D}$ and $\mathcal{D}'$ respectively after we condition on $x_{A}$. Then we have
\begin{align*}
    \Pr_{x \sim \mathcal{D}'}[h(x) = f(x)] &= \Pr_{\substack{x_{A} \sim \mathcal{D}_{A}' \\ x_{B} \sim \mathcal{D}_{B}'(x_{A})}}[h(x_{A}, x_{B}) = f(x_{A}, x_{B})] \\
    &= \Pr_{\substack{x_{A} \sim \mathcal{D}_{A} \\ x_{B} \sim \mathcal{D}_{B}'(x_{A})}}[h(x_{A}, x_{B}) = f(x_{A}, x_{B})] \\
    &= \Pr_{\substack{x_{A} \sim \mathcal{D}_{A} \\ x_{B} \sim \mathcal{D}_{B}(x_{A})}}[h(x_{A}, x_{B}) = f(x_{A}, x_{B})] \\
    &= \Pr_{x \sim \mathcal{D}}[h(x) = f(x)]
\end{align*}
where the second line follows from $\mathcal{D}_{A} = \mathcal{D}_{A}'$, and the third line follows because neither $h$ nor $f$ depend on $x_{B}$. The result now follows from \Cref{thm:VCdim_PAC} combined with \Cref{cor:sparse_VC_union}.
\end{proof}

Finally, we combine Theorems \ref{thm:orig_alpha} and \ref{orig_feat_theorem} into a single statement.

\begin{theorem}\label{thm:orig_sparse_thm}
Let $S$ be a feature space, let $n$ be the number of features, let $\mathcal{D}$ be a distribution over $S^{n}$, and let $\epsilon, \delta, \alpha > 0$. Let $\mathcal{H} = (\mathcal{H}_{k})_{k \ge 1}$ be base hypothesis classes over $S$. Let $f: S^{n} \rightarrow \{0,1\}$ be the ``ground truth," and assume $f \in \hat{\mathcal{H}}_{k}$ for some fixed sparsity $k$. Then there exists a constant $C$ such that, for all
\[
m \ge C \frac{\VCdim(\mathcal{H}_{k}) + k \log n}{\alpha} \log \frac{1}{\delta\alpha}
\]
the following holds. With probability at least $1-\delta$ over $x_{1},...,x_{m}$ drawn independently from $\mathcal{D}$, any hypothesis $h \in \hat{\mathcal{H}}_{k}$ such that $h(x_{i}) = f(x_{i})$ for all $i \in [m]$ ``generalizes out of distribution," in the sense that
\[
\Pr_{x \sim \mathcal{D}'}[h(x) = f(x)] \ge 1 - \epsilon
\]
for all distributions $\mathcal{D}'$ such that $\alpha_{\mathcal{D}_{A},\mathcal{D}_{A}'}(\epsilon) \ge \alpha$, where $A := \Feat(h) \cup \Feat(f)$.
\end{theorem}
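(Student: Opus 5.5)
The plan is to combine the reduction in the proof of \Cref{orig_feat_theorem} with the $\alpha$-substitution trick from \Cref{thm:orig_alpha}. Apply \Cref{thm:VCdim_PAC} to the class $\hat{\mathcal{H}}_k$ with accuracy parameter $\alpha$ in place of $\epsilon$. By \Cref{cor:sparse_VC_union}, $\VCdim(\hat{\mathcal{H}}_k) \le 4\VCdim(\mathcal{H}_k) + 10k\ln n$. So for $m \ge C\frac{\VCdim(\mathcal{H}_k)+k\log n}{\alpha}\log\frac{1}{\delta\alpha}$ (absorbing constants into $C$), with probability at least $1-\delta$ every $h\in\hat{\mathcal{H}}_k$ consistent with the sample satisfies $\Pr_{x\sim\mathcal{D}}[h(x)\ne f(x)] < \alpha$. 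This is one event over the sample that holds simultaneously for all consistent $h$. Everything after this is deterministic: we fix such an $h$ and an arbitrary admissible $\mathcal{D}'$.

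Next, set $A := \Feat(h)\cup\Feat(f)$ and $E := \{x : h(x)\ne f(x)\}$. Since neither $h$ nor $f$ depends on coordinates outside $A$, membership in $E$ is determined by $x_A$. Hence $E = \pi_A^{-1}(E_A)$ for the event $E_A := \{x_A : h(x_A,\cdot)\ne f(x_A,\cdot)\}$ in $S^A$. It follows that $\Pr_{\mathcal{D}}[E] = \Pr_{\mathcal{D}_A}[E_A]$ and $\Pr_{\mathcal{D}'}[E]=\Pr_{\mathcal{D}'_A}[E_A]$; this is the same marginalization computation as in the proof of \Cref{orig_feat_theorem}, but applied to an event rather than requiring equal marginals.

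Now suppose for contradiction that $\Pr_{\mathcal{D}'}[h\ne f]\ge\epsilon$. Then $\Pr_{\mathcal{D}'_A}[E_A]\ge\epsilon$. By the definition of $\alpha_{\mathcal{D}_A,\mathcal{D}'_A}(\epsilon)$ as an infimum, $\Pr_{\mathcal{D}_A}[E_A]\ge \alpha_{\mathcal{D}_A,\mathcal{D}'_A}(\epsilon)\ge\alpha$. Therefore $\Pr_{\mathcal{D}}[h\ne f]\ge\alpha$, which contradicts the first step. So $\Pr_{\mathcal{D}'}[h=f]\ge 1-\epsilon$ for every $\mathcal{D}'$ satisfying the hypothesis.

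The main subtlety is that $A$ depends on the learned $h$, which is itself data-dependent. The argument handles this because the PAC guarantee is uniform over all consistent $h$ in $\hat{\mathcal{H}}_k$, and the $\alpha$ condition is only used afterwards, pointwise for the particular $h$. A minor technical point is measurability of $E_A$. I would take it for granted, or state that $h,f$ are measurable functions of $x_A$, so that $E_A$ is a measurable event in $S^A$.
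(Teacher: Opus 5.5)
Your proposal is correct and follows the paper's route. The paper gives no separate proof and only says this theorem combines the $\alpha$-substitution of \Cref{thm:orig_alpha} with the marginalization over $A$ from \Cref{orig_feat_theorem}, together with \Cref{cor:sparse_VC_union}; that is exactly your argument, and your point that the PAC guarantee is uniform over all consistent $h$ correctly handles the dependence of $A$ on $h$.

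One small fix: \Cref{thm:VCdim_PAC} as stated only gives $\Pr_{\mathcal{D}}[h\ne f]\le\alpha$, not $<\alpha$, so your contradiction step has a boundary case. Either invoke it at accuracy $\alpha/2$ and absorb the factor into $C$, or use, as the paper implicitly does in proving \Cref{thm:orig_alpha}, that its $\epsilon$-net argument already rules out consistent hypotheses with $\mathcal{D}$-error $\ge\alpha$.
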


\newpage
\section{Subspace Juntas}

Explaining out of distribution generalization in terms of sparse hypotheses can only give us basis-dependent explanations, however. To deal with this, we introduce the idea of subspace juntas, which instead of being dependent on a few features, is dependent on a low-dimensional subspace of the input. This is a natural generalization, since we recover sparse hypotheses by choosing the subspace to be the span of the standard basis vectors corresponding to the relevant features of the hypothesis. Subspace juntas are motivated by the idea that while the input space may be high-dimensional, we would like to find a low-dimensional relevant subspace hidden in the input space. For example, we would like to consider a neural network as a function of $W\x$ after the first layer of weights has been applied, which allows us to project from the higher $d$-dimensional input space to a $k$-dimensional subspace. The projection allows the class of these functions to be basis independent.

\begin{definition}[Subspace Junta]
    A function $f: \R^n \rightarrow \{0,1\}$ is a $k$-subspace junta (where $k \le n$) if there exists $W \in \R^{k \times n}$ and a function $g: \R^k \rightarrow \{0,1\}$ such that
    \[ f(\x) = f_W(\x) = g(W\x) \quad \forall \x \in \R^n. \]
\end{definition}

We would like to prove the analogous statements to the theorems for $k$-sparse hypotheses in the previous section. Instead of taking $A$ to be the union of features of $f$ and $h$, we let $f(x) = g^*(W^*x)$ and $h(x) = g(Wx)$, and take $A$ to be the span of the row vectors of $W$ and $W^*$. Then, we require that for the training distribution $\mathcal{D}$ and the test distribution $\mathcal{D}'$, their projections onto $A$, denoted as $\DtrainA$ and $\DtestA$, are equal or sufficiently close. We replace the VC-dimension in the lower bound on the number of required samples $m$ with the VC-dimension bound for $k$-subspace juntas instead of $k$-sparse hypotheses.

\begin{theorem}\label{thm:equal_dist_subspace_juntas}
    Let $S = \R$ be the feature space, where $n$ is the number of features, $\Dtrain$ a distribution on $\R^n$, and let $\epsilon, \delta > 0$. Let $\mathcal{F}$ be a class of $k$-subspace juntas over $\R^n$. Let $f: \R^n \rightarrow \{0,1\}$ be the ground truth function, and assume that $f(x) = g^*(W^*x) \in \mathcal{H}$. Then there exists a constant $C$ such that for all
    \[
        m \ge C \cdot \frac{\VCdim(\mathcal{F})}{\epsilon} \log \frac{1}{\delta \epsilon},
    \]
    the following holds. With probability at least $1-\delta$ over $m$ samples $\x_1, \ldots, \x_m$ drawn independently from $\Dtrain$, any hypothesis $h \in \mathcal{H}$ such that $h(\x_i) = f(\x_i)$ for all $i \in [m]$ has low error on the test distribution, so that
    \[
        \Pr_{\x \sim \Dtest} [h(\x) = f(\x)] \ge 1-\epsilon,
    \]
    for all distributions $\Dtest$ such that the distribution of $P_A x$ where $\x\sim \Dtrain$ is equal to the distribution of $P_A \x'$ where $\x'\sim \Dtest$, where $A := \Span(w_1, \ldots, w_k, w_1^*, \ldots, w_k^*)$, and $P_A$ is the projection onto $A$.
    
\end{theorem}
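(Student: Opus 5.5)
The proof follows the template of \Cref{orig_feat_theorem}: first show that the test error equals the training error for every relevant hypothesis, then invoke \Cref{thm:VCdim_PAC}. (The statement writes $\mathcal{H}$ where it means the class $\mathcal{F}$ of $k$-subspace juntas; we read $\mathcal{H}=\mathcal{F}$ and assume $f\in\mathcal{F}$.)

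Fix a consistent $h(x)=g(Wx)\in\mathcal{F}$ and let $A=\Span(w_1,\dots,w_k,w_1^*,\dots,w_k^*)$. The key observation is that both $h$ and $f$ factor through $P_A$. Every row $w_i$ of $W$ lies in $A$, so
\[
w_i^\top x = w_i^\top (P_A x + (I-P_A)x) = w_i^\top P_A x .
\]
Hence $Wx = WP_Ax$, and likewise $W^*x = W^*P_Ax$. It follows that $h(x)=g(WP_Ax)$ and $f(x)=g^*(W^*P_Ax)$. The disagreement event is therefore
\[
\{x : h(x)\ne f(x)\} = \{x : P_A x \in E\}, \qquad E := \{z\in A : g(Wz)\ne g^*(W^*z)\}.
\]
We take $g,g^*$ measurable, so that $E$ is a measurable subset of $A$.

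This gives the equality of errors directly. Writing $\mathcal{D}_A$ and $\mathcal{D}'_A$ for the laws of $P_Ax$ under $\mathcal{D}$ and $\mathcal{D}'$,
\[
\Pr_{x\sim\mathcal{D}'}[h(x)\ne f(x)] = \mathcal{D}'_A(E) = \mathcal{D}_A(E) = \Pr_{x\sim\mathcal{D}}[h(x)\ne f(x)] .
\]
The middle equality is exactly the hypothesis $\mathcal{D}_A=\mathcal{D}'_A$. This replaces the conditioning argument of \Cref{orig_feat_theorem} with a pushforward computation; no conditional distributions on $A^\perp$ are needed.

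It remains to apply \Cref{thm:VCdim_PAC} to $\mathcal{F}$ and $\mathcal{D}$. With $m\ge C\,\VCdim(\mathcal{F})\epsilon^{-1}\log\frac{1}{\delta\epsilon}$, with probability at least $1-\delta$ every consistent $h\in\mathcal{F}$ satisfies $\Pr_{\mathcal{D}}[h\ne f]\le\epsilon$. The good event depends only on the samples, not on $\mathcal{D}'$. So on that event, for each consistent $h$ and every $\mathcal{D}'$ matching $\mathcal{D}$ on its own $A$ (which depends on $h$), the identity above transfers the bound to $\mathcal{D}'$.

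The main obstacle is this quantifier order: $A$ depends on the learned $h$. It is handled exactly as in \Cref{orig_feat_theorem}, because the uniform in-distribution guarantee is established before any $\mathcal{D}'$ is chosen. A second subtlety is that $W$ is not unique for a given $h$. The statement should be read with respect to the representation used to define $A$; the argument works for any representation, since it only needs the rows to lie in $A$.
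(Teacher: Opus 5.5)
Your proof is correct and follows the paper's argument. Both use that the rows of $W$ and $W^*$ lie in $A$, so $Wx = WP_Ax$ and $W^*x = W^*P_Ax$, and both then equate the agreement probabilities under $\mathcal{D}$ and $\mathcal{D}'$ via equality of the laws of $P_Ax$, after which \Cref{thm:VCdim_PAC} applied to $\mathcal{F}$ finishes; your preimage set $E$ and your explicit handling of the quantifier order (the good sample event is fixed before $A$ and $\mathcal{D}'$ are chosen) only spell out what the paper's chain of equalities leaves implicit.
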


\begin{proof}
    Since $A$ contains the span of the row vectors of $W$ and of the row vectors of $W^*$, then for any vector $x$ we have $Wx = W(P_Ax)$ and $W^*x = W^*(P_Ax)$. Then $h(x)$ and $f(x)$ depend only on the projection of $x$ onto $A$, so that
    \begin{align*}
        h(x) &= g(Wx) = g(WP_Ax) \\
        f(x) &= g^*(W^*x) = g^*(W^*P_Ax)
    \end{align*}

    Then we have
    \begin{align*}
        \Pr_{x\sim \Dtrain}[h(x) = f(x)] &= \Pr_{x\sim \Dtrain}[g(WP_Ax) = g^*(W^*P_Ax)] \\
        &= \Pr_{x' \sim \Dtest}[g(WP_Ax') = g^*(W^*P_Ax')] \\
        &= \Pr_{x' \sim \Dtest}[g(Wx') = g^*(W^*x')] \\
        &= \Pr_{x' \sim \Dtest}[h(x') = f(x')].
    \end{align*}

    The second line follows from the assumption that the distribution of $P_A x$ where $x\sim \Dtrain$ is equal to the distribution of $P_A x'$ where $x'\sim \Dtest$, and the third line follows because $W$ and $W^*$ are each in the span of $A$.
\end{proof}

We can then combine \Cref{thm:equal_dist_subspace_juntas} with \Cref{thm:orig_alpha} to get a more general statement.

\begin{theorem}\label{thm:alpha_dist_subspace_juntas}
    Let $S = \R$ be the feature space, where $n$ is the number of features, $\Dtrain$ a distribution on $\R^n$, and let $\epsilon, \delta, \alpha > 0$. Let $\mathcal{F}$ be a class of $k$-subspace juntas over $\R^n$. Let $f: \R^n \rightarrow \{0,1\}$ be the ground truth function, and assume that $f(x) = g^*(W^*x)$. Then there exists a constant $C$ such that for all

    \[
        m \ge C \cdot \frac{\VCdim(\mathcal{F})}{\alpha} \log \frac{1}{\delta \alpha},
    \]
    the following holds. With probability at least $1-\delta$ over $m$ samples $x_1, \ldots, x_m$ drawn independently from $\Dtrain$, any hypothesis $h \in \mathcal{H}$ such that $h(x_i) = f(x_i)$ for all $i \in [m]$ has low error on the test distribution, so that

    \[
        \Pr_{x \sim \Dtest} [h(x) = f(x)] \ge 1-\epsilon,
    \]
    for all distributions $\Dtest$ such that $\alpha_{\DtrainA, \DtestA}(\epsilon) \ge \alpha$, where $A := \Span(w_1, \ldots, w_k, w_1^*, \ldots, w_k^*)$.
\end{theorem}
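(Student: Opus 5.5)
We will combine the projection argument of \Cref{thm:equal_dist_subspace_juntas} with the amplification argument of \Cref{thm:orig_alpha}. The plan is to show that any consistent hypothesis $h$ with test error at least $\epsilon$ must have training error at least $\alpha$ on $\Dtrain$. After that, the standard Blumer et al.\ argument (\Cref{thm:VCdim_PAC}), run with $\alpha$ in place of $\epsilon$ and applied to the class $\mathcal{F}$ (which contains $f$ and $h$), gives the sample bound
\[
m \ge C\,\VCdim(\mathcal{F})\,\alpha^{-1}\log\frac{1}{\delta\alpha}.
\]

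First, fix $h(x)=g(Wx)$ and $f(x)=g^*(W^*x)$, and let $A$ be the span of the rows of $W$ and $W^*$. As in the proof of \Cref{thm:equal_dist_subspace_juntas}, we have $Wx=WP_Ax$ and $W^*x=W^*P_Ax$. Hence the disagreement set $\{x: h(x)\ne f(x)\}$ equals $P_A^{-1}(E)$, where
\[
E:=\{y\in A: g(Wy)\ne g^*(W^*y)\}.
\]
Consequently
\[
\Pr_{x\sim\Dtest}[h(x)\ne f(x)]=\Pr_{\DtestA}[E]
\quad\text{and}\quad
\Pr_{x\sim\Dtrain}[h(x)\ne f(x)]=\Pr_{\DtrainA}[E].
\]
If the test error is at least $\epsilon$, then $\Pr_{\DtestA}[E]\ge\epsilon$. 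By the definition of $\alpha_{\DtrainA,\DtestA}(\epsilon)$ and the hypothesis that it is at least $\alpha$, this gives $\Pr_{\DtrainA}[E]\ge\alpha$, so $h$ has training-distribution error at least $\alpha$.

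Second, the conclusion follows from the in-distribution guarantee. With probability $1-\delta$, every $h\in\mathcal{F}$ whose $\Dtrain$-error is at least $\alpha$ is inconsistent with some sample. Equivalently, every consistent $h$ has $\Dtrain$-error below $\alpha$. Taking the contrapositive of the first step then gives test error below $\epsilon$.

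The main subtlety is the order of quantifiers. The subspace $A$, and therefore the condition on $\Dtest$, depends on the data-dependent hypothesis $h$. This is handled because the high-probability event from \Cref{thm:VCdim_PAC} is uniform over all of $\mathcal{F}$ and does not refer to $\Dtest$. On that single event, the implication in the first step can be applied pointwise to each consistent $h$ and each $\Dtest$ satisfying the $\alpha$-condition for that $h$'s subspace.

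Two further points need only brief remarks. Measurability of $E$ is implicitly assumed, as it already is in the paper's definitions. The notation $\mathcal{H}$ in the statement should be read as $\mathcal{F}$, with $f\in\mathcal{F}$.
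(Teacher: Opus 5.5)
Your proposal is correct and follows essentially the same route as the paper. The disagreement set of $h$ and $f$ is the preimage under $P_A$ of an event on $A$, so test error at least $\epsilon$ forces $\DtrainA$-probability at least $\alpha_{\DtrainA,\DtestA}(\epsilon)\ge\alpha$, and the Blumer et al.\ argument rerun with $\alpha$ in place of $\epsilon$ finishes. Your remark that the high-probability event is uniform over $\mathcal{F}$ (so the $h$-dependence of $A$ and the quantification over $\Dtest$ cause no trouble) makes explicit a point the paper leaves implicit.
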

    
\begin{proof}
    Since $h$ and $f$ only depend on the subspace $A$, we get that 
    \[\Pr_{x' \sim \Dtest}[h(x') \neq f(x')] = \Pr_{x_A' \sim \DtestA}[h(x_A') \neq f(x_A')].\] 
    
    Then by the definition of $\alpha_{\DtrainA, \DtestA}(\epsilon)$, we have that 
    
    \[
        \Pr_{x_A' \sim \DtestA}[h(x_A') \neq f(x_A')] \ge \epsilon \implies \Pr_{x_A \sim \DtrainA}[h(x_A) \neq f(x_A)] \ge \alpha.
    \]

    Using the same classical argument as in the proof of \cref{thm:VCdim_PAC}, we obtain the result.
\end{proof}

\paragraph{Examples of $k$-subspace juntas.}
If we allow $g$ to come from the class $\classG$ of all functions from $\R^k \rightarrow \{0,1\}$, the VC-dimension of the class $\mathcal{F}$ of $k$-subspace juntas would be infinite, rendering \Cref{thm:alpha_dist_subspace_juntas} meaningless. Therefore, we would like to give the analogous statement to \Cref{cor:sparse_VC_union} by answering the following question:

\begin{question}\label{q:general_subspace_bound}
    Let $\mathcal{F}$ be a class of $k$-subspace juntas such that for each function $f: \R^n \rightarrow \{0,1\}$ in $\mathcal{F}$, the associated function $g: \R^k \rightarrow \{0,1\}$ such that $f(\x) = g(W\x)$ belongs to a class $\classG$ with VC-dimension $\le d$. What is an upper bound on the VC-dimension of $\mathcal{F}$ in terms of $n, k, d$?
\end{question}

Note that we cannot directly apply \Cref{lemma:general_VC_union}, as we would be taking an infinite union corresponding to the arbitrary choice for $W$.
There are some instantiations of $\classG$ where we can get VC-dimension bounds following from those of well known classes.
For instance, by taking $\classG$ to be the class of halfspaces in $\R^k$, which has VC-dimension $k+1$, we have a bound on the VC-dimension of $\mathcal{F}$ because it is the set of halfspaces in $\R^n$, which has VC-dimension $n+1$. 

We can also consider other classes for $\classG$ such as polynomial threshold functions (PTFs). We can classically get a VC-dimension bound for PTFs of degree $p$ by considering the monomials of the polynomial as new features (feature expansion), so that we can view it as a halfspace in the new features. Then $\classG$ would be the class of degree $p$ PTFs in $\R^k$ and would have VC-dimension at most ${k+p \choose p} + 1$, while $\mathcal{F}$ would be a subset of degree $p$ PTFs in $\R^n$ and would have VC-dimension at most ${n+p \choose p}+1$. 

However, it turns out that there is a counterexample to the general statement of \Cref{q:general_subspace_bound} based on a classical construction (see \citep[Chapter 7.2]{anthony2009neural} for a similar example). We leave its proof to the appendix.

\begin{restatable}{claim}{Counterexample}\label{cl:counterexample}
    Let $\classG = \{g\}$ where $g: \R \rightarrow \{0,1\}$ and 
    \[g(x) = 
    \begin{cases} 
          1 & \text{if } \lfloor x \rfloor \equiv 1 \pmod{2} \\
          0 & \text{otherwise.}
    \end{cases}\]
    Let $\mathcal{F}$ be the class consisting of all functions of the form $f(x) = g(w \cdot x)$ where $w \in \R^n$ and $\Norm{w} = 1$. Then the VC-dimension of $\classG$ is $0$, and the VC-dimension of $\mathcal{F}$ is $\infty$.
\end{restatable}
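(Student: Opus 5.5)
The plan has two parts: a trivial argument for $\VCdim(\classG)=0$, and then an explicit shattering construction for $\mathcal{F}$ that encodes arbitrary labels in the binary digits of one coordinate of $w$. This is the classical ``bits of a real parameter'' trick.

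\emph{The class $\classG$.} Since $\classG=\{g\}$ contains a single function, it realizes only one labeling of any point. Shattering even one point needs two labelings, $0$ and $1$, so no nonempty set is shattered and $\VCdim(\classG)=0$.

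\emph{The class $\mathcal{F}$.} I will assume $n\ge 2$. For $n=1$ the constraint $\Norm{w}=1$ forces $w=\pm1$, so $\mathcal{F}$ has at most two functions and the claim must implicitly mean $n\ge2$. Fix any $m$ and take the points $x_i = 2^i e_1$ for $i=1,\dots,m$. Given an arbitrary labeling $(b_1,\dots,b_m)\in\{0,1\}^m$, define
\[
t := \sum_{j=1}^m b_j 2^{-j} \in [0,\,1-2^{-m}],
\qquad
w := \bigl(t,\ \sqrt{1-t^2},\ 0,\dots,0\bigr).
\]
Then $\Norm{w}=1$, and the $e_2$ coordinate absorbs the leftover norm. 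This is the only place $n\ge2$ is used.

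Now compute $w\cdot x_i = 2^i t$. Split the sum as
\[
2^i t = \sum_{j\le i} b_j 2^{i-j} + \sum_{j>i} b_j 2^{i-j}.
\]
The first sum is an integer. The second lies in $[0,1)$, because it is at most $\sum_{j>i}2^{i-j}\cdot[j\le m] = 1-2^{i-m} < 1$. Hence
\[
\lfloor 2^i t\rfloor = \sum_{j\le i} b_j 2^{i-j} \equiv b_i \pmod 2,
\]
so $g(w\cdot x_i)=b_i$ for every $i$. Every labeling of $\{x_1,\dots,x_m\}$ is therefore realized, and the set is shattered. Since $m$ was arbitrary, $\VCdim(\mathcal{F})=\infty$.

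The only delicate step is checking that the fractional tail stays strictly below $1$, so that the floor picks out exactly the $i$-th binary digit. Truncating $t$ at $m$ bits makes this a finite geometric sum bounded by $1-2^{i-m}$.
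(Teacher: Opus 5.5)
Your proof is correct and follows the paper's argument: the same points $x_i = 2^i e_1$, and the paper's choice $\theta_w = \arccos\bigl(\sum_j y_j 2^{-j}\bigr)$ gives exactly your $w$ with first coordinate $t$. Two things you add are welcome clarifications. One is the explicit bound $1-2^{i-m}<1$ on the fractional tail. The other is your remark that $n\ge 2$ is needed, which the paper assumes tacitly by letting $w$ range over a unit circle.
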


The above example demonstrates that subspace juntas may have infinite VC-dimension even if the inner class $\classG$ has bounded VC-dimension. This is unavoidable: using similar constructions, it is known that there are neural networks whose activation functions are smooth, bounded, and monotonically increasing, which have infinite VC-dimension. Indeed, there are examples where the network only has two layers and two neurons in the first layer, which has activation functions satisfying being convex to the left of zero and concave to the right in addition to the properties listed above which still has infinite VC-dimension \citep[Chapter 7.2]{anthony2009neural}. 

One of the largest classes $\classG$ that make the corresponding class of subspace juntas $\mathcal{F}$ to have finite VC-dimension is the class of (indicators of) semi-algebraic sets. This class includes all neural networks whose activation functions are piecewise polynomial (thus including halfspaces and ReLUs).

\begin{definition}[Semi-algebraic set, see e.g. \citep{Chernikov2017ModelTheory}, \citep{anthony2009neural}]
    Let 
        \begin{equation*}
                \mathcal{S}_{t,\ell,n} := \{ g: \mathbb{R}^n \rightarrow \{0,1\} :
                g(x) = b(p_1(x), \ldots, p_t(x)) \},
        \end{equation*}
    where $b : \{0,1\}^t \rightarrow \{0,1\}$ is an arbitrary boolean function, and $p_i : \R^n \rightarrow \{0,1\}$ are degree $\le \ell$ polynomial threshold functions. That is, $p_i(x) = \sign(q_i(x))$ for some real polynomial $q_i: \R^n \rightarrow \R$ with degree $\le \ell$. We say $\semiSet_{t,\ell,n}$ is a semi-algebraic subset of $\R^n$.
\end{definition}

\begin{theorem}[\cite{Chernikov2017ModelTheory}, Chapter 1]\label{thm: VCD_semi} 

    Let $\semiSet_{t,\ell,n}$ be a semi-algebraic set as defined above. Then, its VC-dimension can be upper bounded as follows:
    \[
        \VCdim(\semiSet_{t,\ell,n}) \le 2t {n+\ell \choose \ell}\log \Paren{t(t+1) {n+\ell \choose \ell}}.
    \]
\end{theorem}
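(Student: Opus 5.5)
The plan is to reduce everything to halfspaces by \emph{linearization}, and then count labelings with Sauer's Lemma (\Cref{lemma:Sauer}). Set $N := {n+\ell \choose \ell}$. This is the number of monomials of degree $\le \ell$ in $n$ variables, including the constant monomial. Let $\phi:\R^n\to\R^N$ be the monomial feature map. Any polynomial $q$ of degree $\le \ell$ can be written as $q(x)=\langle c,\phi(x)\rangle$ for a coefficient vector $c\in\R^N$. So the class of degree-$\le\ell$ PTFs on $\R^n$ is the pullback under $\phi$ of homogeneous halfspaces in $\R^N$, and its VC-dimension is at most $N$. This is the same feature-expansion argument the paper uses for PTFs just before \Cref{q:general_subspace_bound}.

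Next I would fix $m$ points $x_1,\dots,x_m\in\R^n$ and count the labelings that $\semiSet_{t,\ell,n}$ can induce on them.
\begin{itemize}
\item By \Cref{lemma:Sauer}, each single PTF $p_i$ induces at most $\sum_{j\le N}{m\choose j}\le (em/N)^N$ patterns on the $m$ points.
\item The label of $x_j$ is $b(p_1(x_j),\dots,p_t(x_j))$, so the labeling is determined by the $t$-tuple of patterns of $p_1,\dots,p_t$ together with $b$.
\item For a fixed $b$, this gives at most $(em/N)^{tN}$ labelings.
\end{itemize}
Shattering requires $2^m\le (em/N)^{tN}$, that is, $m\le tN\log_2(em/N)$.

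It then remains to show that this inequality fails once $m$ exceeds $M:=2tN\log\big(t(t+1)N\big)$. At $m=M$ the right-hand side is
\[
tN\log\big(2et\log(t(t+1)N)\big),
\]
which is smaller than $M=2tN\log\big(t(t+1)N\big)$, because $2et\log(t(t+1)N)\le (t(t+1)N)^2$ for all $t,N\ge1$ up to checking small cases. The function $m\mapsto m - tN\log(em/N)$ is increasing for $m\ge tN$, so the inequality also fails for every larger $m$. This pins down the stated constant.

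The main obstacle is the Boolean combiner $b$. If $b$ is allowed to range over all $2^{2^t}$ Boolean functions inside a single class, the count picks up an extra factor $2^{2^t}$. That would add a term of order $2^t$ to the bound, and the stated bound has no such term. My first attempt would be to read the theorem, as in the model-theoretic source, with $b$ fixed; the union over $b$ can then be handled afterwards by \Cref{lemma:general_VC_union} at additive cost $O(2^t)$. Alternatively, one can note that only the sign patterns actually realized on the sample matter. I would also double-check the base of the logarithm and the small-$t,N$ edge cases in the final inequality, since the stated constant is tight enough that this arithmetic has to be done carefully.
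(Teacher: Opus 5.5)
The paper gives no proof of this statement. It only cites Chernikov's notes, so there is no internal argument to compare yours against. Read with the combiner $b$ \emph{fixed}, your argument is the standard Blumer et al.\ counting argument and is correct: linearize through the monomial map to get VC-dimension at most $N=\binom{n+\ell}{\ell}$ for degree-$\le\ell$ PTFs, bound the labelings of a fixed $t$-fold Boolean combination by the $t$-th power of the Sauer bound, and solve $2^m\le (em/N)^{tN}$. Two small repairs to the arithmetic:
\begin{itemize}
\item The map $m\mapsto m-tN\log_2(em/N)$ is increasing only for $m\ge tN/\ln 2$, not for $m\ge tN$. This is harmless, since $M\ge 2tN$.
\item With $\log=\log_2$, your inequality $2et\log_2(t(t+1)N)\le (t(t+1)N)^2$ fails exactly at $t=N=1$, the trivial case $\ell=0$. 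There, use the exact sum $\sum_{j\le N}\binom{m}{j}$ instead of $(em/N)^N$. With natural logarithms a few more small cases need the same treatment, and they all go through.
\end{itemize}

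The point you call the ``main obstacle'' should not be hedged, because it decides whether the statement is true at all. Suppose $b$ ranges over all Boolean functions inside $\semiSet_{t,\ell,n}$, which is the literal reading of the set-builder definition. Then the displayed bound is false. Take $\ell=1$ and $n\ge t$. Consider the $2^t$ points $v\in\{0,1\}^t\times\{0\}^{n-t}$ and set $q_i(x)=x_i-\tfrac12$. No point lies on a zero set, so $(p_1(v),\dots,p_t(v))=v$ regardless of the convention for $\sign(0)$. Choosing $b$ equal to the target labeling realizes every labeling, so $\VCdim(\semiSet_{t,1,n})\ge 2^t$.

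With $n=t$ the claimed bound is $2t(t+1)\log\big(t(t+1)^2\big)=O(t^2\log t)$. For example, $t=n=20$ gives $2^{20}$ against at most about $1.1\times 10^4$.

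This has two consequences for your fallback options:
\begin{itemize}
\item The remark that ``only the sign patterns actually realized on the sample matter'' cannot close the gap. In this example all $2^t$ patterns are realized, and the extra factor $2^{\min(m,2^t)}$ is really present.
\item The route through \Cref{lemma:general_VC_union} proves a different, true statement, with an extra additive term of about $10\cdot 2^t\ln 2$. The example shows that a $2^t$ term is unavoidable.
\end{itemize}

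So commit to the fixed-$b$ reading and state it explicitly in the theorem; this is the setting in which the standard results are proved. You can then add that letting $b$ vary still gives finite VC-dimension, which is enough for the paper's finiteness remark. It does not give the displayed bound, nor the paper's claim that the bound is polynomial in $t$.
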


Thus, the VC-dimension bound is polynomial in $t$ and $n$ for some fixed $\ell$. Now, we consider the definition of subspace juntas with respect to semi-algebraic sets.

\begin{definition}
    The class of semi-algebraic subspace juntas is written as
    \begin{equation*}
        \mathcal{F}_{n, \ell, k, t} := \{ f: \mathbb{R}^n \rightarrow \{0,1\} : f(x) = b(p_1(Wx), \dots, p_t(Wx)) \}
    \end{equation*}
    where $b : \{0,1\}^t \rightarrow \{0,1\}$ is an arbitrary boolean function, $p_i : \mathbb{R}^k \rightarrow \{0,1\}$ are degree $\le \ell$ polynomial threshold functions, and $W \in \mathbb{R}^{k \times n}$ is an arbitrary real matrix. 
\end{definition}

Note that $\mathcal{F}_{n, \ell, k, t}$ is a subset of $\semiSet_{t,\ell,n}$, because we can consider any degree $\ell$ polynomial $p(Wx)$ as a degree $\ell$ polynomial $q(x)$. Thus $\mathcal{F}_{n, \ell, k, t}$ is still a semi-algebraic set, so its VC-dimension is also bounded by \Cref{thm: VCD_semi}. This establishes that $\mathcal{F}_{n, \ell, k, t}$ has finite VC dimension, but we can obtain the following tighter bound which has an explicit dependence on $k$ and a linear dependence on $n$. We leave its proof to the appendix.

\begin{theorem}\label{thm:VCD_semisubspace}
    Let $\mathcal{F}_{n, \ell, k, t}$ be a class of semi-algebraic subspace juntas as defined above. Then, its VC-dimension can be upper bounded as follows:
    \[
        \VCdim(\mathcal{F}_{n, \ell, k, t}) \le 2\Paren{kn + t{{k+\ell} \choose \ell}} \log(12t(\ell+1)).
    \]
\end{theorem}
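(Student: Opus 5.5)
The plan is to view $\mathcal{F}_{n,\ell,k,t}$ as a parametrized family and apply the classical Warren/Goldberg--Jerrum sign-pattern counting argument. The parameters are the $kn$ entries of $W$ together with the coefficients of the $t$ polynomials $q_1,\dots,q_t$ on $\R^k$. Each $q_i$ has at most ${k+\ell \choose \ell}$ coefficients, so the total number of real parameters is
\[
N = kn + t{k+\ell \choose \ell}.
\]
The Boolean function $b$ ranges over a finite set, but it never needs to be counted. The reason is that the labeling a function $f$ induces on a point set is determined by the sign vector $(\sign q_i(Wx_j))_{i,j}$. So it suffices to bound the number of distinct sign vectors over all parameter choices, and then take the union over the at most $2^{2^t}$ choices of $b$, which we will avoid as follows. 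For fixed signs of all $q_i(Wx_j)$, every $b$ is determined pointwise. Hence the number of labelings is at most (the number of sign patterns) times the number of restrictions of $b$ to the realized sign-vectors. To keep the bound clean, I would instead observe that shattering $m$ points requires $2^m$ labelings, and each labeling is realized by some $b$ applied to a realized sign matrix. Then I would bound $2^m \le (\#\text{sign matrices})\cdot(\text{number of } b)$. If the $2^{2^t}$ factor turns out to be too costly, I would note that only the $\le m$ distinct columns matter, so the count of usable $b$ is at most $2^{m}$ restricted to those columns. The honest route is to count sign patterns of the $tm$ polynomials and note that $f$'s labeling on the points is a function of the sign matrix, which gives $2^m \le \#\text{sign matrices}$ with $b$ fixed per labeling. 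Since $b$ is chosen after the signs, this still requires care, so the precise handling of $b$ is the step I would verify first. I expect it to follow the standard semi-algebraic proof in \citep{anthony2009neural}, where $b$ contributes nothing beyond the sign matrix.

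The key step is this. For fixed points $x_1,\dots,x_m\in\R^n$, each map $(W,q_i)\mapsto q_i(Wx_j)$ is a polynomial in the $N$ parameters. Its degree is at most $\ell$ in $W$ and at most $1$ in the coefficients of $q_i$, so the total degree is at most $\ell+1$. We therefore have $tm$ polynomials of degree $\le \ell+1$ in $N$ variables. By Warren's theorem, in the form of \citep[Theorem 8.3]{anthony2009neural}, the number of sign patterns (including zero, treated as in the definition of $\sign$) is at most
\[
\Paren{\frac{8e\,(\ell+1)\,tm}{N}}^{N}
\]
for $tm\ge N$, or the variant $2\Paren{2e(\ell+1)tm/N}^N$.

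Shattering requires $2^m$ to be at most this quantity. It remains to solve $m \le N\log_2\Paren{c(\ell+1)tm/N}$ via the standard lemma that $m\le N\log(Bm/N)$ implies $m\le 2N\log B$ once $B$ is large enough (here $B\ge 4$ or so). With $B=12t(\ell+1)$, absorbing $8e$ into $12$ and the slack, this gives
\[
\VCdim(\mathcal{F}_{n,\ell,k,t})\le 2N\log(12t(\ell+1)).
\]
Besides the handling of $b$, the main obstacle is matching the constant $12$. I would tune it by choosing the appropriate form of Warren's bound and the inequality $\ln x\le x/e$-type estimates, and I would allow adjustment of the constant if necessary.
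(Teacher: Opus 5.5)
Your route is the paper's route. You use the same parameter vector of length $N = kn + t\binom{k+\ell}{\ell}$, the same observation that $a \mapsto q_i(Wx_j)$ is a polynomial of degree at most $\ell+1$, and the same appeal to Anthony--Bartlett's Theorem 8.3. The step you flagged, the treatment of $b$, is a genuine gap, and the resolution you settle on is wrong. The inequality $2^m \le \#\{\text{sign matrices}\}$ holds only when $b$ is fixed, because only then is the labeling a function of the sign matrix $(\sign q_i(Wx_j))_{i,j}$. In $\mathcal{F}_{n,\ell,k,t}$ the function $b$ varies with $f$, so the labeling is a function of the pair (sign matrix, $b$). A single sign matrix whose columns take $s$ distinct values then yields $2^s$ labelings, with $s$ as large as $\min(m,2^t)$. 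Anthony--Bartlett's Theorem 8.3 is proved for a $t$-combination with a \emph{single fixed} combining function. That is exactly why $b$ ``contributes nothing'' there, and that hypothesis is not met here. Your fallback of bounding the number of usable $b$ by $2^m$ makes the inequality vacuous.

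The gap cannot be closed, because the statement with arbitrary $b$ is false. Take $n=k=t$, $\ell=1$, $W=I_k$ and $q_i(y)=y_i$. Then $\mathcal{F}_{k,1,k,k}$ contains $x\mapsto b(\sign x_1,\dots,\sign x_k)$ for every $b:\{0,1\}^k\to\{0,1\}$. So it shatters $\{-1,1\}^k$ and $\VCdim \ge 2^k$. The claimed bound is $2(2k^2+k)\log(24k)=O(k^2\log k)$; at $k=13$ with $\log_2$ it is about $5.8\times 10^3<2^{13}$, and it is smaller for any larger base. Even for $k=n=2$, $t$ lines in general position give $\binom{t}{2}+t+1$ independently labelable cells, against a claimed $O(t\log t)$.

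The paper's proof has the same defect: its restatement of Theorem 8.3 lets $b$ depend on $f$, which Anthony--Bartlett do not allow. What your argument does prove is the stated bound for each \emph{fixed} $b$. For that, use the form $2(2e(\ell+1)tm/N)^N$ you mention, not the $8e$ form, whose constant cannot be absorbed into $12$. For arbitrary $b$, take a union over the $2^{2^t}$ choices using the paper's union lemma. This gives $\VCdim(\mathcal{F}_{n,\ell,k,t}) \le 8N\log_2(12t(\ell+1)) + 10\cdot 2^t\ln 2$. The orthant example shows that an additive term exponential in $t$ cannot be avoided in general.
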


This bound is linear in $n$ rather than scaling like $n^\ell$. Since an upper bound on VC-dimension corresponds to a smaller sample complexity required to learn a class, we see that the case $k \ll n$ is useful in efficiently learning subspace juntas even in spaces with large input dimension.

We note that the linear dependence on $n$ of the VC-dimension bound for $\mathcal{F}_{n, \ell, k, t}$ is unavoidable. For example, take $\ell=1, k = 1, t = 1$, and $b: \{0,1\} \rightarrow \{0,1\}$ to be the identity function. Consider the functions $f \in \mathcal{F}_{n, 1, 1, 1}$ where $p$ is a degree $1$ PTF $\sign(w \cdot x)$ for an arbitrary $w \in \R^n$. Then these form the class of all halfspaces in $\R^n$, for which the VC-dimension bound of $n + 1$ is known to be tight. Since this is a subset of $\mathcal{F}_{n, 1, 1, 1}$, the overall class has a VC-dimension lower bounded by $n + 1$.

\section{Experiments}
To investigate the practical relevance of sparsity-induced generalization, we examine the $L_1$-induced sparsity on the ACSIncome prediction task using the 2024 Folktables dataset \citep{ding2022retiringadultnewdatasets} (sourced from the US Census). We evaluate OOD generalization of this task (predicting whether a person's income is $>\$50$K) by training on data from California and testing on a geographic shift to Texas. After filtering for the top 30 most common occupations to control categorical cardinality, the dataset consists of $81,451$ CA samples and $58,797$ TX samples. Using the union of features from all Folktables tasks and one-hot encoding categorical data, we construct a high-dimensional pool of $711$ columns. Of these, only $287$ of these columns correspond to features from the ACSIncome feature set; the rest serve as ``nuisance" features which are not necessarily relevant to predicting income.

We use $L_1$ regularized logistic regression with a sweep of various regularization values to illustrate the effect of sparsity on generalization. As Figure \ref{fig:acc_CA_to_TX} shows, both the training and validation accuracy (on CA) monotonically increase with the number of features. On the other hand, OOD accuracy peaks at higher sparsity ($101$ features). As the model becomes denser, performance decreases as it picks up features which were predictive in CA but not in TX. This corresponds to the increase in maximum TV distance of any active feature around 100 nonzero features.

From the TV distance plot, we can see the sparse models naturally select features with low TV distance. This observation aligns with our core theoretical assumption from Theorem \ref{orig_feat_theorem} that OOD transfer is possible when the learner identifies a subset of features where the train and test marginals approximately overlap $(\DtrainA \approx \DtrainA'$). The inclusion of a geographical identifier (the state) with TV distance $1$ highlights that sparsity helps avoid the spurious feature trap. 

\begin{figure}[H]
    \centering
    \begin{subfigure}[t]{\linewidth}
        \centering
        \includegraphics[width=\linewidth]{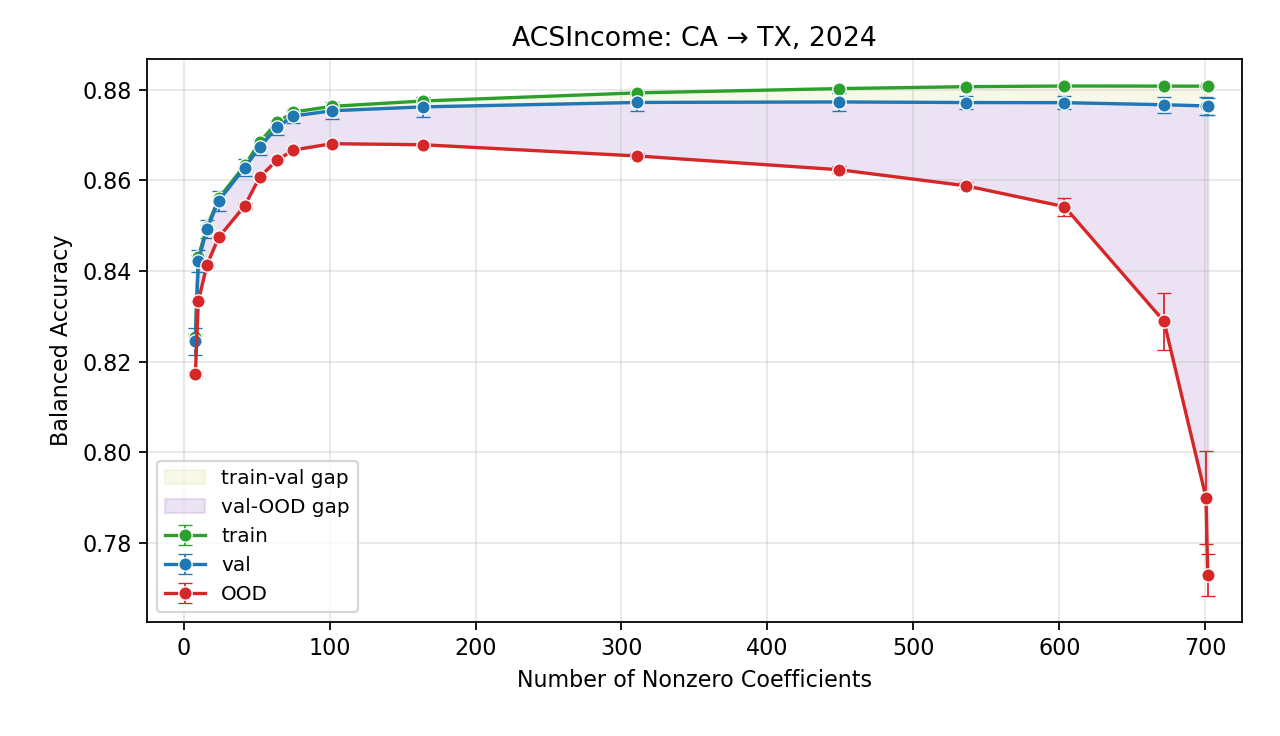}
        \caption{Balanced accuracy on train, validation, and OOD.}
        \label{fig:acc_CA_to_TX_acc}
    \end{subfigure}
    \hfill
    \begin{subfigure}[t]{\linewidth}
        \centering
        \includegraphics[width=\linewidth]{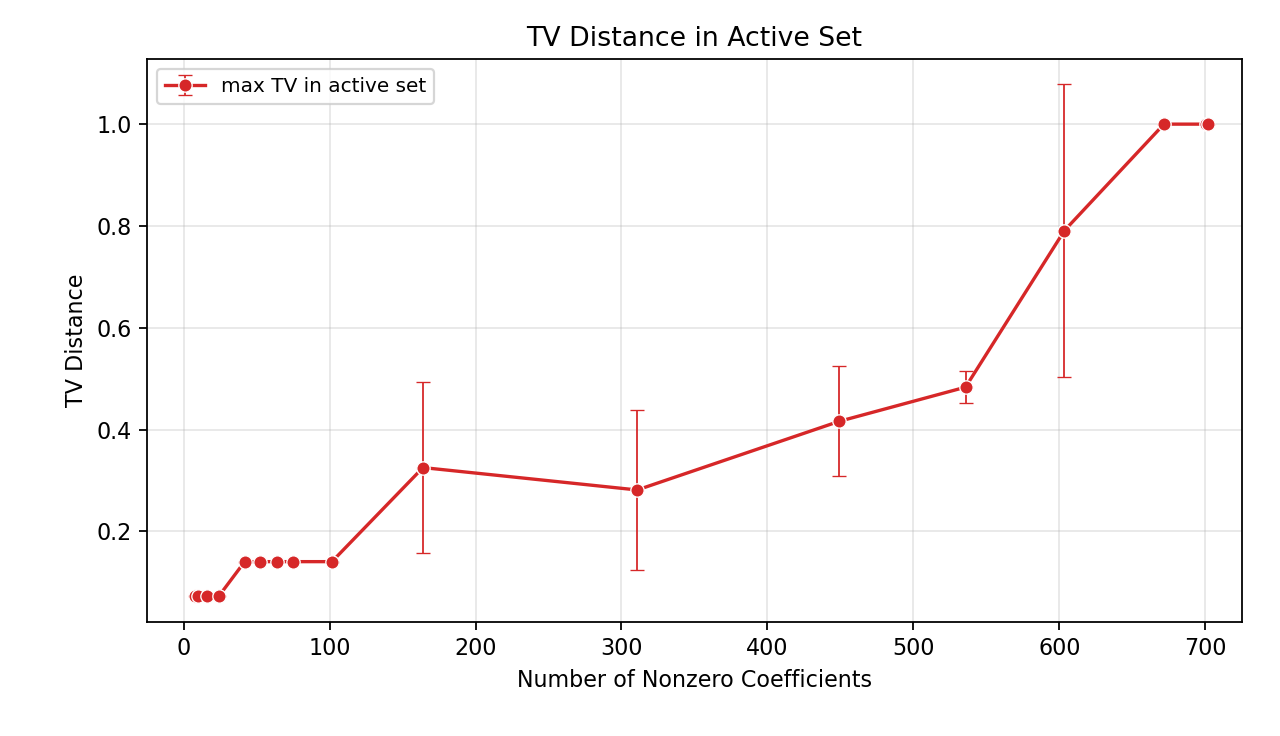}
        \caption{Max total variation distance for active features.}
        \label{fig:acc_CA_to_TX_tv}
    \end{subfigure}
    \caption{$L_1$ sweep for ACSIncome from CA to TX. \textbf{(a)} OOD accuracy peaks around $101$ nonzero features and declines as the model becomes too dense. \textbf{(b)} As number of nonzero features increases, $L_1$ starts including features with higher CA $\rightarrow$ TX TV distance. This climb in max active TV distance around $100$ nonzero features corresponds to the OOD-accuracy drop in the previous plot.}
    \label{fig:acc_CA_to_TX}
\end{figure}

\section{Open Problems}

We have proposed a PAC-style framework for explaining out of distribution generalization through sparsity. This formalizes Occam's Razor as a natural inductive bias, explaining when we can expect generalization even when the training and test distributions differ entirely on irrelevant coordinates. Are there other natural generalizations of the PAC learning theory or VC-dimension that have explanatory power for OOD generalization? We also note the following directions for future work:

\paragraph{Improved VC-dimension bound for semi-algebraic subspace juntas.}
Our current bound on VC-dimension for semi-algebraic subspace juntas is a specific instantiation of a more general VC bound. Is it possible to obtain a tighter bound for the VC-dimension of semi-algebraic subspace juntas? Can a matching lower bound be obtained? Is the dependence on $k$ optimal?

\paragraph{Incorporating max margin into OOD.}

From the perceptron learning algorithm and the perceptron convergence theorem to SVMs, the concept of maximum margin has been foundational and essential in machine learning. Even in more recent work on the softmax-attention model in transformer architecture shows implicit bias towards a max-margin solution \citep{TLZOS23maxmargin}. Can we formally determine the role of max-margin in OOD generalization?

\paragraph{Explicit algorithms and addressing noise.}
Our theorems primarily answer statistical learning theory questions, addressing when we have enough samples and under which distributional assumptions we can learn sparse hypotheses and subspace juntas. Natural directions for future work would be to find explicit algorithms for recovering the relevant subspace and consider learning in the agnostic setting, as we have assumed that all examples are perfectly realized by a ground truth function.

\section*{Acknowledgments}
This work was begun when one of us (SA) was at OpenAI. We are grateful to Coefficient Giving for supporting this research. We thank Ilya Sutskever, Lionel Levine, Adam Klivans, Eric Price, Harvey Lederman, and Boaz Barak for helpful discussions.

\bibliographystyle{alpha}
\bibliography{references}

@article{valiant1984,
author = {Valiant, L. G.},
title = {A theory of the learnable},
year = {1984},
issue_date = {Nov. 1984},
publisher = {Association for Computing Machinery},
address = {New York, NY, USA},
volume = {27},
number = {11},
issn = {0001-0782},
url = {https://doi.org/10.1145/1968.1972},
doi = {10.1145/1968.1972},
journal = {Commun. ACM},
month = nov,
pages = {1134–1142},
numpages = {9}
}

@book{goodman1955,
  title     = {Fact, Fiction, and Forecast},
  author    = {Goodman, Nelson},
  year      = {1955},
  publisher = {Harvard University Press},
  address   = {Cambridge, MA}
}

@article{blumer1989,
author = {Blumer, Anselm and Ehrenfeucht, A. and Haussler, David and Warmuth, Manfred K.},
title = {Learnability and the Vapnik-Chervonenkis dimension},
year = {1989},
issue_date = {Oct. 1989},
publisher = {Association for Computing Machinery},
address = {New York, NY, USA},
volume = {36},
number = {4},
issn = {0004-5411},
url = {https://doi.org/10.1145/76359.76371},
doi = {10.1145/76359.76371},
journal = {J. ACM},
month = oct,
pages = {929–965},
numpages = {37}
}

@article{bendavid2010theory,
  title     = {A theory of learning from different domains},
  author    = {Ben-David, Shai and Blitzer, John and Crammer, Koby and Kulesza, Alex and Pereira, Fernando and Vaughan, Jennifer Wortman},
  journal   = {Machine Learning},
  volume    = {79},
  number    = {1},
  pages     = {151--175},
  year      = {2010},
  publisher = {Springer},
  doi       = {10.1007/s10994-009-5152-4},
  url       = {https://doi.org/10.1007/s10994-009-5152-4}
}

@book{anthony2009neural,
  title     = {Neural Network Learning: Theoretical Foundations},
  author    = {Anthony, Martin and Bartlett, Peter L.},
  year      = {2009},
  publisher = {Cambridge University Press},
  address   = {Cambridge, UK},
  isbn      = {978-0-521-11862-0},
  note      = {Originally published in 1999}
}

@unpublished{Chernikov2017ModelTheory,
  author = {Artem Chernikov},
  title  = {Model Theory and Combinatorics: Chapter 2 (Draft)},
  note   = {Lecture notes, last updated February 28, 2017},
  year   = {2017},
  url    = {http://www.math.ucla.edu/~chernikov/teaching/StabilityTheory285D/StabilityNotes.pdf}
}

@inproceedings{mmr09,
title	= {Domain Adaptation: Learning Bounds and Algorithms},author	= {Yishay Mansour and Mehryar Mohri and Afshin Rostamizadeh},year	= {2009},URL	= {http://www.cs.nyu.edu/~mohri/postscript/nadap.pdf},booktitle	= {Proceedings of The 22nd Annual Conference on Learning Theory (COLT 2009)},address	= {Montr\'eal, Canada}}

@inproceedings{Blitzer07,
 author = {Blitzer, John and Crammer, Koby and Kulesza, Alex and Pereira, Fernando and Wortman, Jennifer},
 booktitle = {Advances in Neural Information Processing Systems},
 editor = {J. Platt and D. Koller and Y. Singer and S. Roweis},
 pages = {},
 publisher = {Curran Associates, Inc.},
 title = {Learning Bounds for Domain Adaptation},
 url = {https://proceedings.neurips.cc/paper_files/paper/2007/file/42e77b63637ab381e8be5f8318cc28a2-Paper.pdf},
 volume = {20},
 year = {2007}
}

@inproceedings{bendavid2006analysis,
  title     = {Analysis of Representations for Domain Adaptation},
  author    = {Ben-David, Shai and Blitzer, John and Crammer, Koby and Pereira, Fernando},
  booktitle = {Advances in Neural Information Processing Systems 19 (NIPS 2006)},
  pages     = {137--144},
  year      = {2006},
  publisher = {MIT Press},
  url       = {https://papers.nips.cc/paper/2006/hash/b1b043238319d2da773d191a2046ca7c-Abstract.html}
}

@inproceedings{bendavid2010impossibility,
  title     = {Impossibility Theorems for Domain Adaptation},
  author    = {Ben-David, Shai and Lu, Tyler and Luu, Teresa and P{\'a}l, D{\'a}vid},
  booktitle = {Proceedings of the Thirteenth International Conference on Artificial Intelligence and Statistics (AISTATS 2010)},
  series    = {JMLR Workshop and Conference Proceedings},
  volume    = {9},
  pages     = {129--136},
  year      = {2010},
  publisher = {JMLR.org},
  url       = {http://proceedings.mlr.press/v9/ben-david10a.html}
}

@inproceedings{redko2017theoretical,
  title     = {Theoretical Analysis of Domain Adaptation with Optimal Transport},
  author    = {Redko, Ievgen and Habrard, Amaury and Sebban, Marc},
  booktitle = {Machine Learning and Knowledge Discovery in Databases: European Conference, ECML PKDD 2017},
  pages     = {737--753},
  year      = {2017},
  publisher = {Springer International Publishing},
  doi       = {10.1007/978-3-319-71249-9_44}
}

@article{redko2020survey,
  title     = {A Survey on Domain Adaptation Theory: Learning Bounds and Theoretical Guarantees},
  author    = {Redko, Ievgen and Morvant, Emilie and Habrard, Amaury and Sebban, Marc and Bennani, Youn{\`e}s},
  journal   = {arXiv preprint arXiv:2004.11829},
  year      = {2020},
  url       = {https://arxiv.org/abs/2004.11829}
}

@article{arjovsky2019invariant,
  title={Invariant Risk Minimization},
  author={Arjovsky, Martin and Bottou, L{\'e}on and Gulrajani, Ishaan and Lopez-Paz, David},
  journal={arXiv preprint arXiv:1907.02893},
  year={2019}
}

@article{mossel04juntas,
  title   = {Learning Functions of $k$ Relevant Variables},
  author  = {Mossel, Elchanan and O'Donnell, Ryan and Servedio, Rocco A.},
  journal = {Journal of Computer and System Sciences},
  volume  = {69},
  number  = {3},
  pages   = {421--434},
  year    = {2004},
  note    = {Preliminary version in STOC 2003},
  url     = {https://www.cs.columbia.edu/~rocco/papers/stoc03juntas.html}
}

@techreport{blum1994relevant,
  author      = {Blum, Avrim L.},
  title       = {Relevant Examples and Relevant Features: Thoughts from Computational Learning Theory},
  institution = {School of Computer Science, Carnegie Mellon University},
  year        = {1994},
  number      = {FS-94-02},
  series      = {AAAI Technical Report},
  address     = {Pittsburgh, PA},
  pages       = {14--18}
}

@article{blum1997learning,
  author    = {Blum, Avrim L. and Kannan, Ravindran},
  title     = {Learning an Intersection of a Constant Number of Halfspaces over a Uniform Distribution},
  journal   = {Journal of Computer and System Sciences},
  volume    = {54},
  number    = {3},
  pages     = {371--380},
  year      = {1997},
  publisher = {Academic Press},
  doi       = {10.1006/jcss.1997.1475}
}

@inproceedings{vempala2013complexity,
  author    = {Vempala, Santosh and Xiao, Ying},
  booktitle = {2013 Asilomar Conference on Signals, Systems and Computers}, 
  title     = {Complexity of learning subspace juntas and ICA}, 
  year      = {2013},
  pages     = {182-186},
  doi       = {10.1109/ACSSC.2013.6810286},
  publisher = {IEEE},
  address   = {Pacific Grove, CA, USA}
}

@article{polyanskiy2010channel,
  title={Channel coding rate in the finite blocklength regime},
  author={Polyanskiy, Yury and Poor, H. Vincent and Verd{\'u}, Sergio},
  journal={IEEE Transactions on Information Theory},
  volume={56},
  number={5},
  pages={2307--2359},
  year={2010},
  publisher={IEEE},
  month={May}
}

@book{shalev2014understanding,
  title={Understanding machine learning: From theory to algorithms},
  author={Shalev-Shwartz, Shai and Ben-David, Shai},
  year={2014},
  publisher={Cambridge university press},
  address={Cambridge, UK}
}

@article{zhou2022dg_survey,
  title   = {Domain Generalization: A Survey},
  author  = {Zhou, Kaiyang and Liu, Ziwei and Qiao, Yu and Xiang, Tao and Loy, Chen Change},
  journal = {IEEE Transactions on Pattern Analysis and Machine Intelligence},
  year    = {2022},
  doi     = {10.1109/TPAMI.2022.3195549},
  eprint  = {2103.02503},
  archivePrefix = {arXiv}
}

@article{wang2023dg_survey,
  title   = {Generalizing to Unseen Domains: A Survey on Domain Generalization},
  author  = {Wang, Jindong and Lan, Cuiling and Liu, Chang and Ouyang, Yidong and Qin, Tao and Lu, Wang and Chen, Yiqiang and Zeng, Wenjun and Yu, Philip S.},
  journal = {IEEE Transactions on Knowledge and Data Engineering},
  year    = {2023},
  volume  = {35},
  number  = {8},
  pages   = {8052--8072},
  doi     = {10.1109/TKDE.2022.3178128},
  eprint  = {2103.03097},
  archivePrefix = {arXiv}
}

@inproceedings{gulrajani2021domainbed,
  title     = {In Search of Lost Domain Generalization},
  author    = {Gulrajani, Ishaan and Lopez-Paz, David},
  booktitle = {International Conference on Learning Representations (ICLR)},
  year      = {2021},
  url       = {https://openreview.net/forum?id=lQdXeXDoWtI}
}

@inproceedings{sagawa2019groupdro,
  title={Distributionally Robust Neural Networks},
  author={Sagawa, Shiori and Koh, Pang Wei and Hashimoto, Tatsunori B and Liang, Percy},
  booktitle={International Conference on Learning Representations},
  year = {2020}
}

@inproceedings{Ye2021Towards,
  title     = {Towards a Theoretical Framework of Out-of-Distribution Generalization},
  author    = {Ye, Haotian and Xie, Chuanlong and Cai, Tianle and Li, Ruichen and Li, Zhenguo and Wang, Liwei},
  booktitle = {Advances in Neural Information Processing Systems},
  volume    = {34},
  year      = {2021},
  url       = {https://proceedings.neurips.cc/paper/2021/hash/c5c1cb0bebd56ae38817b251ad72bedb-Abstract.html}
}

@inproceedings{TLZOS23maxmargin,
author = {Tarzanagh, Davoud Ataee and Li, Yingcong and Zhang, Xuechen and Oymak, Samet},
title = {Max-margin token selection in attention mechanism},
year = {2023},
publisher = {Curran Associates Inc.},
address = {Red Hook, NY, USA},
booktitle = {Proceedings of the 37th International Conference on Neural Information Processing Systems},
articleno = {2096},
numpages = {49},
location = {New Orleans, LA, USA},
series = {NIPS '23}
}

@article{AV06projection,
author = {Arriaga, Rosa I. and Vempala, Santosh},
title = {An algorithmic theory of learning: Robust concepts and random projection},
year = {2006},
issue_date = {May       2006},
publisher = {Kluwer Academic Publishers},
address = {USA},
volume = {63},
number = {2},
issn = {0885-6125},
url = {https://doi.org/10.1007/s10994-006-6265-7},
doi = {10.1007/s10994-006-6265-7},
journal = {Mach. Learn.},
month = may,
pages = {161–182},
numpages = {22}
}

@misc{ding2022retiringadultnewdatasets,
      title={Retiring Adult: New Datasets for Fair Machine Learning}, 
      author={Frances Ding and Moritz Hardt and John Miller and Ludwig Schmidt},
      year={2022},
      eprint={2108.04884},
      archivePrefix={arXiv},
      primaryClass={cs.LG},
      url={https://arxiv.org/abs/2108.04884}, 
}

\appendix
\newpage

\section{\texorpdfstring{Proof of Lemma~\ref{lemma:general_VC_union}}{Proof of Lemma 2}}\label{sec: VC_union}

\GeneralVCUnion*

\begin{proof}
Let $D = \VCdim(\mathcal{H})$. Set $L := \frac{D}{2}-d$, so that $D = 2(d+L)$, and note that we can assume $L \ge 0$, since otherwise we are done. By definition, there exist sample points $x_{1},...,x_{D} \in S$ such that for all $2^{D}$ possible strings $y = y_{1} \cdot\cdot\cdot y_{D} \in \{0,1\}^{D}$, there exists a hypothesis $h_{y} \in \mathcal{H}$ such that $h_{y}(x_{1}) = y_{1},...,h_{y}(x_{D}) = y_{D}$. By counting, clearly there exists an $i^{*}$ such that $h_{y} \in \mathcal{H}_{i^{*}}$ for at least $2^{D}/M$ values of $y$. So then
\[
\frac{2^{D}}{M} \le \sum_{i=0}^{d} \binom{D}{i} \le 2^{D} \exp\left(-\frac{L^{2}}{2D}\right)
\]
where the first inequality uses \Cref{lemma:Sauer} (Sauer's Lemma) and the second uses Chernoff. Solving, we get $L \le \sqrt{2D \ln M}$, so
\[
\frac{D}{2}-d \le \sqrt{2D \ln M} ,
\]
or solving the quadratic and using the arithmetic-geometric mean inequality,
\[
D \le 2d + 4 \ln M + 4 \sqrt{\ln^{2} M + d \ln M} \le 4d + 10 \ln M.
\]
\end{proof}

\section{\texorpdfstring{Proof of Claim \ref{cl:counterexample}}{Proof of Claim 1}}
\Counterexample*
\begin{figure}[h]
    \centering
    \begin{tikzpicture}
        \begin{axis}[
            axis lines = middle,
            xmin = -5.5, xmax = 5.5,
            ymin = -0.2, ymax = 2.5,
            ytick = {0,1},
            yticklabel style={xshift=0.5cm},
            xtick = {-5,-4,...,5},
            grid = both,
            grid style = {solid, gray!15},
            clip = false, 
            width = 10cm, 
            height = 6cm,
            tick label style={font=\small},
            label style={font=\small},
            xlabel = {$x$},
            ylabel = {$y$}
        ]
        
        \addplot [
            domain=-5:5, 
            samples=200, 
            jump mark left, 
            very thick, 
            red,
            no marks
        ] {mod(floor(x), 2) == 0 ? 0 : 1};
    
        \node[anchor=south west, font=\small, fill=white, inner sep=2pt]
            at (axis cs: 0.5, 1.1) {$g(x) = \begin{cases} 1 & \lfloor x \rfloor \text{ is odd} \\ 0 & \lfloor x \rfloor \text{ is even} \end{cases}$};
    
        \end{axis}
    \end{tikzpicture}
    \caption{$g(x)$, the square wave function.}
    \label{fig:parity_plot}
\end{figure}
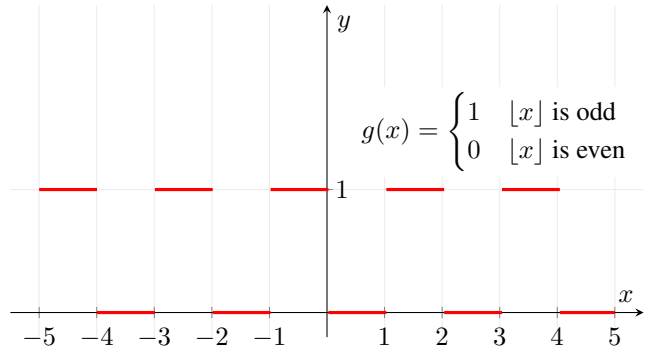

\begin{proof}
    Since $\classG$ contains a single fixed function, it cannot shatter any points and has VC-dimension $0$.
    
    Let $w$ vary over the unit circle. Then for any $x \in \R^n$, we have $w \cdot x = \Norm{w}\Norm{x}\cos{\theta_w} = \Norm{x} \cos{\theta_w}$, where $\theta_w$ is the angle between $x$ and $w$.

    Consider the following $m$ points in $\R^n$: \[x_i = (2^i, 0, \dots, 0), \quad \forall i \in \{1, 2, \dots, m\}.\] These satisfy $\Norm{x_i} = 2^i$. For any given labeling of these points $(x_1, y_1), \ldots, (x_m, y_m)$ where $y_i \in \{0,1\}$, choose $w$ on the unit circle so that \[\theta_w = \arccos{\Paren{\frac{y_1}{2} + \frac{y_2}{2^2} + \ldots + \frac{y_m}{2^m}}}.\] 

    \begin{figure}[h]
        \centering
        \begin{tikzpicture}
            \begin{axis}[
                axis lines = middle,
                xmin = -1, xmax = 10,
                ymin = -1, ymax = 8,
                xlabel = {$x$},
                ylabel = {$y$},
                xtick = {2, 4, 8},
                ytick = \empty,
                width = 10cm, height = 6cm,
                axis line style={-stealth},
                clip = false
            ]
    
                \addplot [
                    only marks,
                    mark=*,
                    mark size=2.5pt,
                    blue
                ] coordinates {(2,0) (4,0) (8,0)};
    
                \coordinate (O) at (axis cs:0,0);
                \coordinate (X) at (axis cs:5,0); 
                \coordinate (W) at (axis cs:10, 6); 
    
                \draw[->, thick, red] (O) -- (W) node[above right] {$w$};
    
                \pic [draw, "$\theta_w$", angle radius=1.5cm, angle eccentricity=1.2] {angle = X--O--W};
    
            \end{axis}
        \end{tikzpicture}
        \caption{For $m = 3$ and $n = 2$, points $x_i$ along the $x$-axis and the weight vector $w$, where the angle $\theta_w$ is the angle between the $x$-axis (since all $x_i$ lie on it) and $w$.}
        \label{fig:vc_dimension_construction}
    \end{figure}
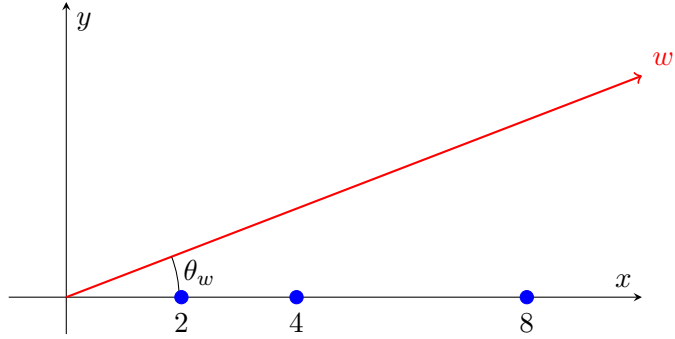
    
    We choose this $\theta_w$ so that $\cos{\theta_w}$ encodes the $y_i$ labels in binary, given by $(0.y_1y_2\ldots y_m)_2$. Multiplying by $\Norm{x_i} = 2^i$ effectively bit shifts so that the last digit in front of the decimal point is $y_i$. That is, for a given point $x_i$, $w \cdot x$ evaluates to 
    \begin{align*}
        \Norm{x_i} \cos{\theta_w} &= 2^i \cdot \Paren{\frac{y_1}{2} + \frac{y_2}{2^2} + \cdots + \frac{y_m}{2^m}} \\
        &= 2^{i-1} \cdot y_1 + \cdots + 2 \cdot y_{i-1} + y_i + 2^i \Paren{\frac{y_{i+1}}{2^{i+1}} + \cdots + \frac{y_m}{2^m}} \\
        &= (y_1y_2\cdots y_i.y_{i+1}\cdots y_m)_2
    \end{align*}

    The floor of this function is \[2^{i-1} \cdot y_1 + \ldots + 2 \cdot y_{i-1} + y_i = (y_1\cdots y_i)_2,\] whose parity only depends on $y_i$. Then by definition of $g$, we will have $g(w \cdot x_i) = y_i$. The number of points thus shattered can be arbitrarily large, so $\mathcal{F}$ has infinite VC-dimension.
\end{proof}

\section{\texorpdfstring{Proof of Theorem \ref{thm:VCD_semisubspace}}{Proof of Theorem 7}}

We derive the VC-dimension bound on semi-algebraic subspace juntas by observing that they are an instantiation of the class considered in Theorem 8.3 of \citep{anthony2009neural}. We first include the theorem statement here for completeness.

\begin{theorem}[\cite{anthony2009neural}, Theorem 8.3]\label{thm:VCtcombo}
    Let $H$ be a class of functions mapping from $\R^d \times \R^n$ to $\R$ so that for all $x \in \R^n, h \in H$, the function $a \mapsto h(a,x)$ is a polynomial on $\R^d$ of degree no more than $r$. 
    
    Suppose that $F$ is a $t$-combination of $\sign(H)$; that is, for every $f \in F$, there exists a boolean function $b: \{0,1\}^t \rightarrow \{0,1\}$ and functions $h_1, \ldots, h_k$ in $G$ such that for some parameter vector $a \in \R^d$, we have
    \[
        f(x) = b(\sign(h_1(a,x)), \ldots, \sign(h_k(a,x)))
    \]
    for all $x \in \R^n$. Then 
    \[\VCdim(F) \le 2d\log_2(12tr).\]
\end{theorem}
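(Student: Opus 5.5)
The plan is the standard growth-function argument: reduce the behaviour of $F$ on a finite set of points to sign patterns of polynomials in the parameter vector $a\in\R^d$, bound the number of such patterns with a Warren-type count, and compare that count with $2^m$.

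\textbf{Step 1 (reduce to sign patterns).} Fix $m$ points $x_1,\ldots,x_m\in\R^n$ that $F$ shatters. For each $i\in[m]$ and $j\in[t]$, define the polynomial
\[
q_{ij}(a)=h_j(a,x_i).
\]
By hypothesis each $q_{ij}$ is a polynomial on $\R^d$ of degree at most $r$, which gives $N=tm$ polynomials in total. Fix the boolean combiner $b$ and the functions $h_1,\ldots,h_t$. The label vector $\bigl(f(x_1),\ldots,f(x_m)\bigr)$ is then a function of the sign vector
\[
\bigl(\sign(q_{ij}(a))\bigr)_{i,j}\in\{0,1\}^{N}.
\]
So the number of dichotomies of $x_1,\ldots,x_m$ realized by $F$ is at most the number of distinct sign vectors this polynomial family attains as $a$ ranges over $\R^d$. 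Here I read the theorem as the paper uses it: $b$ and the $h_j$ are fixed once, and only $a$ varies over the class.

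\textbf{Step 2 (count sign patterns).} Bound the number of attained sign vectors by $2\,(2eNr/d)^d$ for $N\ge d$, using Warren's theorem (Theorem~8.1 of \citep{anthony2009neural}).

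I expect this to be the main obstacle. The difficulty is that $\sign$ here is $\{0,1\}$-valued with $\sign(0)=1$, whereas Warren's theorem counts connected components of the complement of the zero sets $\{q_{ij}\neq0\}$. I would handle it as follows.
\begin{enumerate}
\item Choose one parameter $a_s$ realizing each distinct sign vector $s$; only finitely many are needed.
\item Pick $\varepsilon>0$ smaller than every $|q_{ij}(a_s)|$ that is nonzero.
\item Replace each $q_{ij}$ by $q_{ij}+\varepsilon$. At every chosen $a_s$ this preserves the $\{0,1\}$-sign and makes every value nonzero.
\item Choosing $\varepsilon$ generically, distinct sign vectors now lie in distinct connected components of $\{a:\prod_{i,j}(q_{ij}(a)+\varepsilon)\neq0\}$.
\end{enumerate}
Warren's bound on the number of such components, at most $(4eNr/d)^d$, is what I would invoke. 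The weaker $2(2eNr/d)^d$ form is not essential; a constant-factor change only alters the constant $12$ slightly.

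\textbf{Step 3 (solve the inequality).} Shattering forces
\[
2^m\le 2\Paren{\frac{2etmr}{d}}^d.
\]
(If $N<d$, then $m<d$ and the claimed bound holds trivially.) Take logarithms to get
\[
m\le 1+d\log_2\Paren{\frac{2etr}{d}}+d\log_2 m.
\]
Then apply the standard estimate $\log_2 m\le \frac{m}{2d}+\log_2\!\bigl(\frac{2d}{e\ln 2}\bigr)$, which comes from concavity: $\ln x\le \alpha x-\ln\alpha-1$ with $\alpha=\frac{\ln 2}{2d}$. This absorbs the $d\log_2 m$ term into $m/2$ and cancels the $d$ inside the logarithm, leaving
\[
m\le 2d\log_2\Paren{\frac{4tr}{\ln 2}}+2.
\]
Since $4/\ln2<6$, and the additive $2$ can be absorbed by the leftover slack ($12tr$ versus $6tr$, with $d\ge1$), this gives $m\le 2d\log_2(12tr)$.

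The final arithmetic is routine; only Step~2 needs care about the zero-sign convention.
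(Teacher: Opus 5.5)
Your proposal is correct, and it is essentially the standard proof of Anthony--Bartlett's Theorem~8.3, which the paper cites rather than proves: you bound the dichotomies on $m$ shattered points by the $\{0,1\}$-sign vectors of the $tm$ polynomials $a\mapsto h_j(a,x_i)$ in $d$ variables, count these with a Warren-type bound, and solve $2^m\le 2(2etmr/d)^d$ via $\ln x\le \alpha x-\ln\alpha-1$; your small constant shift is the same device Anthony--Bartlett use to handle $\sign(0)=1$. Two small points: the hedge about the constant is unnecessary, since Warren's original $(4eNr/d)^d$ in the same calculation gives $m\le 2d\log_2(8tr/\ln 2)\le 2d\log_2(12tr)$ directly; and keeping $b$ and the $h_j$ fixed across $F$ is justified by Anthony--Bartlett's definition of a $t$-combination (if they could depend on $f$, the statement would be false), not by the paper's usage, because the paper's application to $\mathcal{F}_{n,\ell,k,t}$ lets $b$ vary with $f$.
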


We now restate \Cref{thm:VCD_semisubspace} here for convenience.
\begin{theorem}[\Cref{thm:VCD_semisubspace}, restated]
    Let $\mathcal{F}_{n, \ell, k, t}$ be a class of semi-algebraic subspace juntas. Then, its VC-dimension can be upper bounded as follows:
    \[
        \VCdim(\mathcal{F}_{n, \ell, k, t}) \le 2d \log(12t(\ell+1)),
    \]
    where $d = kn + t{{k+\ell} \choose \ell}.$
\end{theorem}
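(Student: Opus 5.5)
The plan is to realize $\mathcal{F}_{n,\ell,k,t}$ as a $t$-combination of $\sign(H)$ for a suitable parametrized class $H$, and then apply \Cref{thm:VCtcombo} directly. The parameter vector $a$ will collect all the real numbers that specify a member of $\mathcal{F}_{n,\ell,k,t}$, namely the $kn$ entries of $W\in\R^{k\times n}$ and the coefficients of the $t$ polynomials $q_1,\dots,q_t:\R^k\to\R$ of degree at most $\ell$. Since the space of polynomials of degree at most $\ell$ in $k$ variables has dimension ${k+\ell\choose \ell}$, the parameter space is $\R^{d}$ with $d=kn+t{k+\ell\choose\ell}$, which is exactly the $d$ in the restated bound. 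The Boolean function $b$ is not a real parameter; it is absorbed into the $t$-combination structure.

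Next I define $h_j(a,x):=q_j(Wx)$ for $j=1,\dots,t$, where $q_j$ and $W$ are read off from $a$. Then every $f\in\mathcal{F}_{n,\ell,k,t}$ has the form $f(x)=b(\sign(h_1(a,x)),\dots,\sign(h_t(a,x)))$, which is the form required by \Cref{thm:VCtcombo}. Each $h_j$ is a fixed function of $(a,x)$, so $H=\{h_1,\dots,h_t\}$ is a finite class; that is harmless, since the theorem only needs the polynomial-in-$a$ property. (I would also reconcile the statement's typos, reading ``$h_1,\ldots,h_k$ in $G$'' as $h_1,\dots,h_t\in H$.)

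The main step is to bound the degree $r$ of $a\mapsto h_j(a,x)$ for fixed $x$. Write $q_j(z)=\sum_{|\beta|\le\ell}c_{j,\beta}z^\beta$ with $z=Wx$. For fixed $x$, each coordinate $z_i=\sum_s W_{is}x_s$ is linear in the entries of $W$. So a monomial $z^\beta$ with $|\beta|\le \ell$ is a polynomial of degree at most $\ell$ in the entries of $W$, and multiplying by the coefficient $c_{j,\beta}$ raises the total degree by one. Hence $h_j(\cdot,x)$ is a polynomial on $\R^d$ of degree at most $r=\ell+1$. This is the point to be careful about: the degree is counted jointly in all parameters, not in $x$, and it is why the bound contains the factor $\ell+1$ rather than $\ell$.

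Substituting $r=\ell+1$ and $d=kn+t{k+\ell\choose\ell}$ into \Cref{thm:VCtcombo} gives $\VCdim(\mathcal{F}_{n,\ell,k,t})\le 2d\log_2(12t(\ell+1))$, which is the claimed bound. I do not expect a real obstacle. The only delicate points are matching the hypotheses of \Cref{thm:VCtcombo} precisely, in particular that $\sign$ there means the $\{0,1\}$-valued threshold as in the definition of PTFs, and checking the degree count.
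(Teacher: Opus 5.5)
Your parametrization ($d=kn+t\binom{k+\ell}{\ell}$ real parameters from $W$ and the coefficients of $q_1,\dots,q_t$), the degree count $r=\ell+1$, and the appeal to \Cref{thm:VCtcombo} are exactly the paper's appendix argument, and those steps are fine. The gap is the sentence saying that $b$ ``is absorbed into the $t$-combination structure.''

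In Anthony--Bartlett, a class is a $t$-combination of $\sign(H)$ when there is \emph{one} Boolean function $b$ and fixed $h_1,\dots,h_t$ such that every member of the class has the form $x\mapsto b(\sign(h_1(a,x)),\dots,\sign(h_t(a,x)))$ for some $a$. Only $a$ varies. The proof bounds the number of labelings of $x_1,\dots,x_m$ by the number of sign patterns of the $tm$ polynomials $a\mapsto h_j(a,x_i)$. That count says nothing about how many labelings a \emph{varying} $b$ can extract from a single sign pattern.

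The version quoted in the paper lets $b$ depend on $f$. That quantifier swap is a more serious defect than the $k$/$G$ typos you noticed, and in that form the quoted theorem is false. Since $\mathcal{F}_{n,\ell,k,t}$ lets $b$ range over all $2^{2^t}$ Boolean functions, your argument, like the paper's, only proves the bound for the subclass with a fixed $b$.

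In fact the statement as written cannot be proved. Take $n=k=t$, $\ell=1$, $W=I$, and $p_i(z)=\sign(z_i)$. The $2^t$ points of $\{-1,1\}^t$ get pairwise distinct vectors $(p_1(x),\dots,p_t(x))$. Choosing $b$ appropriately therefore realizes every labeling, so $\VCdim(\mathcal{F}_{t,1,t,t})\ge 2^t$. Here $d=2t^2+t$, so the claimed bound is $2(2t^2+t)\log_2(24t)=O(t^2\log t)$. Already for $t=13$ it equals $702\log_2 312<5900<2^{13}$, and a natural log only makes it smaller. In particular, the VC-dimension of $\mathcal{F}_{n,\ell,k,t}$ is not bounded by any polynomial in $n,k,t,\ell$.

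What your argument does give is $2d\log_2(12t(\ell+1))$ for each fixed $b$. For the class as defined you must pay for the choice of $b$. For example, applying \Cref{lemma:general_VC_union} to the union of the $2^{2^t}$ fixed-$b$ subclasses gives $\VCdim(\mathcal{F}_{n,\ell,k,t})\le 8d\log_2(12t(\ell+1))+10\cdot 2^t\ln 2$.
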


\begin{proof}
    First, we parameterize the class. Any $f \in \mathcal{F}_{n, \ell, k, t}$ by definition can be written as
    \[
        f(x) = b(\sign(q_1(Wx)), \ldots, \sign(q_t(Wx))),
    \]
    where each $q_i$ is a polynomial of degree $\le \ell$ corresponding to the PTFs in $f$. It can be parameterized by a vector $a \in \R^d$, where $a$ contains the $kn$ entries of $W$ and the ${{k+\ell} \choose \ell}$ coefficients of each of the $t$ polynomials $q_1, \ldots, q_t$. This makes $d = kn + t{{k+\ell} \choose \ell}$ the number of parameters stored in $a$.
    
    More specifically, let $H$ be a class of functions mapping from $\R^d \times \R^n$ to $\R$ so that for all $x \in \R^n, h \in H$, the function $g(a) := h(a,x)$ is a polynomial on $\R^d$ of degree $r \le \ell + 1$. Given $f \in \mathcal{F}_{n, \ell, k, t}$, choose the functions in $H$ such that $g_i(a) = h_i(a,x) := q_i(Wx)$. Note that this satisfies the condition that $g_i$ is a polynomial of degree $\le \ell + 1$ on $\R^d$, since $q_i$ has degree $\le \ell$ and $Wx$ is linear in the elements of $a$, and the coefficients of $q_i$ appear in $a$.

    Then, note that this parameterization of $\mathcal{F}_{n, \ell, k, t}$ makes it a $t$-combination of $\sign(H)$. Plugging in the appropriate parameters into \Cref{thm:VCtcombo}, we get a bound of $2d\log(12t(\ell + 1))$, as above.
\end{proof}

\section{Experimental Details}

\paragraph{Dataset.}
We use 2024 American Community Survey (ACS) data through the Folktables interface \citep{ding2022retiringadultnewdatasets}, which processes data from the US Census and contains several pre-determined prediction tasks. The 2024 calendar year is the most recent year with full data available. We take California to be the training environment and Texas as the out of distribution test environment.

\paragraph{Task.}
One of the predetermined tasks in Folktables is ACSIncome, which is to predict whether annual person income was greater than \$50,000 in a year. We apply Folktables' filter for adults with valid income, as ACSIncome does already. Since we use $L_1$ regularized logistic regression, we needed to one hot encode the categorical data, which was done on each state independently. 

The ten features that Folktables associated with ACSIncome were: age (AGEP), class of worker - private/public/self/etc. (COW), educational attainment (SCHL), marital status (MAR), occupation code (OCCP), place of birth (POBP), relationship to householder (RELSHIPP), usual hours worked per week (WKHP), sex (SEX), and race (RAC1P). As OCCP consisted of 530 different categories, in order to limit this cardinality among the expanded one hot encoded feature set, we restrict both states to rows whose OCCP is among the top 30 codes by frequency in CA. Thus OCCP's one hot encoded expansion is capped at 30 columns. After the filtering, there were $81,451$ entries ($48.2\%$ positive) for CA and $58797$ for TX ($44.4\%$ positive).

\paragraph{Feature Pool.} We took the 29-feature union of the five canonical Folktable benchmark tasks (ACSIncome, ACSEmployment, ACSPublicCoverage, ACSMobility, ACSTravelTime), excluding PINCP itself, which was the target (individual annual income).

\paragraph{Preprocessing.} Missing values were filled with $-1$. Categorical features were one-hot encoded on each state independently. They were then left-joined onto CA's column index: TX-only categories are dropped, and CA-only categories were zero-filled in the TX matrix. Columns were scaled by their CA training std to preserve the zero-structure of the one-hot indicators); zero-variance columns were left unscaled. The full design matrix had $711$ columns.

\paragraph{Model and Regularization Values.} The model was $L_1$ regularized logistic regression (scikit-learn, liblinear solver, 500 maximum iterations, tolerance $10^{-3}$). For each of five random seeds, we drew a stratified 80/20 train/validation split of CA, fit the model on CA train at each $C$ in a 17-point grid spanning $C \in [2 \times 10^{-4}, 10^2]$ (densest in the active region $C \in [2\times 10^{-4},\ 8\times 10^{-3}]$, sparser in the saturation tail), and evaluate on CA validation and the full filtered TX as the OOD test. We did not subsample and used the full dataset. The training and testing were performed on a single laptop and required approximately 30 minutes to run; not much compute was needed.

\paragraph{Analysis.} We plotted the balanced accuracy on the train/validation/OOD splits to avoid bias due to label against the number of nonzero coefficients. The features considered in the active set were those which had nonzero coefficients. We additionally computed, for each column, the total-variation distance between the train and OOD marginals, and plotted the maximum TV over the active feature set. Each plot showed the mean over the five seeds and plotted error bars with $\pm 1$ standard deviation for each value of $C$. There were $\pm 1$ standard deviation vertical error bars in the accuracy and active-set TV plots as well. The following is the full table of the values of $C$, the corresponding number of nonzero features, and the accuracies.

\begin{table}[h]
\centering
\small
\caption{$L_1$ regularization sweep for ACSIncome (CA $\to$ TX). Reported as mean $\pm$ standard deviation across seeds.}
\label{tab:l1_sweep}
\begin{tabular}{rrccc}
\toprule
$C$ & $L_0$ & Train & Val & OOD \\
\midrule
0.0002    & $8.0 \pm 0.0$    & $0.8256 \pm 0.0006$ & $0.8245 \pm 0.0030$ & $0.8172 \pm 0.0004$ \\
0.0003013 & $10.0 \pm 0.0$   & $0.8432 \pm 0.0005$ & $0.8422 \pm 0.0025$ & $0.8333 \pm 0.0001$ \\
0.000454  & $16.0 \pm 0.7$   & $0.8498 \pm 0.0004$ & $0.8493 \pm 0.0020$ & $0.8414 \pm 0.0003$ \\
0.000684  & $24.0 \pm 2.0$   & $0.8561 \pm 0.0005$ & $0.8555 \pm 0.0022$ & $0.8475 \pm 0.0002$ \\
0.001031  & $41.8 \pm 0.4$   & $0.8633 \pm 0.0002$ & $0.8629 \pm 0.0019$ & $0.8544 \pm 0.0006$ \\
0.001553  & $52.2 \pm 0.8$   & $0.8685 \pm 0.0003$ & $0.8674 \pm 0.0017$ & $0.8608 \pm 0.0002$ \\
0.002339  & $64.0 \pm 2.0$   & $0.8728 \pm 0.0004$ & $0.8718 \pm 0.0017$ & $0.8645 \pm 0.0002$ \\
0.003524  & $75.2 \pm 1.3$   & $0.8751 \pm 0.0003$ & $0.8742 \pm 0.0016$ & $0.8667 \pm 0.0001$ \\
\textbf{0.00531} & $\mathbf{101.6 \pm 1.1}$ & $\mathbf{0.8764 \pm 0.0002}$ & $\mathbf{0.8754 \pm 0.0019}$ & $\mathbf{0.8681 \pm 0.0002}$ \\
0.008     & $164.0 \pm 5.1$  & $0.8775 \pm 0.0004$ & $0.8762 \pm 0.0023$ & $0.8679 \pm 0.0004$ \\
0.015     & $310.8 \pm 5.7$  & $0.8793 \pm 0.0004$ & $0.8772 \pm 0.0018$ & $0.8654 \pm 0.0005$ \\
0.02823   & $449.2 \pm 7.6$  & $0.8803 \pm 0.0004$ & $0.8773 \pm 0.0019$ & $0.8624 \pm 0.0008$ \\
0.05313   & $535.8 \pm 7.9$  & $0.8807 \pm 0.0004$ & $0.8772 \pm 0.0015$ & $0.8588 \pm 0.0007$ \\
0.1       & $603.6 \pm 13.2$ & $0.8808 \pm 0.0003$ & $0.8772 \pm 0.0014$ & $0.8542 \pm 0.0020$ \\
0.5       & $671.8 \pm 1.6$  & $0.8808 \pm 0.0002$ & $0.8767 \pm 0.0017$ & $0.8289 \pm 0.0063$ \\
5         & $700.6 \pm 3.2$  & $0.8808 \pm 0.0003$ & $0.8765 \pm 0.0019$ & $0.7900 \pm 0.0102$ \\
100       & $701.8 \pm 3.0$  & $0.8808 \pm 0.0003$ & $0.8763 \pm 0.0020$ & $0.7729 \pm 0.0046$ \\
\bottomrule
\end{tabular}
\end{table}

\end{document}